\documentclass{article}

\usepackage{my_neurips_2023}

\newcommand{\myparagraph}[1]{\paragraph{#1}}

\title{Strategic Data Sharing between Competitors}

%

\author{%
  Nikita Tsoy \\
  INSAIT, Sofia University\\
  Sofia, Bulgaria\\
  \texttt{nikita.tsoy@insait.ai} \\
  \And
  Nikola Konstantinov\\
  INSAIT, Sofia University\\
  Sofia, Bulgaria\\
  \texttt{nikola.konstantinov@insait.ai} \\
}

\begin{document}

\maketitle

\begin{abstract}
Collaborative learning techniques have significantly advanced in recent years, enabling
  private model training across multiple organizations. Despite this
  opportunity, firms face a dilemma when considering data sharing with
  competitors---while collaboration can improve a company's machine learning
  model, it may also benefit competitors and hence reduce profits. In this
  work, we introduce a general framework for analyzing this data-sharing
  trade-off. The framework consists of three components, representing the
  firms' production decisions, the effect of additional data on model quality,
  and the data-sharing negotiation process, respectively. We then study an
  instantiation of the framework, based on a conventional market model from
  economic theory, to identify key factors that affect collaboration
  incentives. Our findings indicate a profound impact of market
conditions on the data-sharing incentives. In particular, we find that reduced competition, in terms of the
  similarities between the firms' products, and harder learning tasks
  foster collaboration.
\end{abstract}

\section{Introduction}
\label{sec:intro}
Machine learning has become integral to numerous business functions in the past
decade, such as service operations optimization, the creation of new products,
consumer service analytics, and risk analysis \citep{survey22}. Despite the
transformative power of machine learning, its efficacy hinges significantly on
the quality and quantity of the training data, making data a key asset
corporations can use to increase their profit.

One way to enhance data access and machine learning models is collaboration via
data sharing \citep{r20, d22}. Data-sharing schemes can bring
benefits in many industries, such as agriculture, finance, and healthcare
\citep{d22, w19, r20}. However, at least two barriers exist to such
collaborations. The first is privacy and the associated regulatory concerns,
which can be partially addressed by new collaborative learning techniques, such
as federated learning \citep{k21}. The second is proper incentives for
collaboration. If the entities have no such incentives, they may not
collaborate at all, free-ride \citep{b21, k22}, or attack the shared model
\citep{blanchard2017machine}.

Such conflicting incentives arise naturally between market competitors. On the
one hand, the competitors are an appealing source of data, since they operate
on the same market and have data of a similar type. On the other hand, the
collaboration could also strengthen the competitors' machine learning models,
potentially leading to downstream losses for the company. This concern is
especially important for big firms that can influence prices and act
strategically by considering how their decisions affect competitors' responses.
Strategic actions might increase profits through various channels and produce
complex downstream effects. For example, firms might collude to capture more
revenue\footnote{\scriptsize{\url{https://www.nytimes.com/2021/01/17/technology/google-facebook-ad-deal-antitrust.html}}}
or engage in a price war to win a
market.\footnote{\scriptsize{\url{https://www.reuters.com/article/us-uber-results-breakingviews-idUSKCN1UY2X5}}}
Thus, \emph{data sharing between big firms might have a complicated downstream
effect on their profits}.

\myparagraph{Our contributions} Despite the significance of this data-sharing
trade-off, particularly for large, tech-savvy companies, there is limited
research on how market competition affects collaborative learning incentives.
Our work aims to fill this gap by proposing a framework to investigate
data-sharing incentives. The framework is modular and consists of three
components: market model, data impact, and collaboration scheme. These
components represent the firms' production decisions, the effect of additional
data on model quality and the data-sharing negotiation process, respectively.

To investigate the key factors that influence data-sharing decisions, we
instantiate our framework using a conventional market model from economic
theory and a data impact model grounded in learning theory. We examine
three potential collaboration schemes: binary collaboration between two firms,
partial collaboration between two firms, and binary collaboration among
multiple firms. Our findings consistently indicate a profound impact of market
conditions on the collaboration incentives. In the case of two firms, we
theoretically demonstrate that collaboration becomes more appealing as market
competition, measured by the similarities of the firms' products, decreases or
the learning task complexity increases. For multiple firms, we conduct
simulation studies that reveal similar trends.

\section{Related work}
\label{sec:rel_work}
\myparagraph{Data sharing incentives in machine learning}
Incentives for collaboration in machine learning constitute an important
research topic, especially in the context of new learning paradigms, such as
federated learning \citep{k21}. Some notable lines of work concern incentives
under data heterogeneity \citep{d21, w22}; compensation for computation,
energy, and other costs related to training \citep{yu20, tu22, liu22}; fair
distribution of the benefits resulting from collaboration \citep{lyu20b, lyu20,
b21}; and free-riding \citep{richardson20, k22}. We refer to \citet{z21, y20,
zhan21} for a detailed overview.

A fundamental difference between our work and the research described above is
that we explicitly consider the downstream market
incentives of the firms and their effects on data-sharing decisions. These incentives are orthogonal to those previously
studied in the literature. For instance, the data-collection game of
\citet{k22} can be included in our framework as an additional stage of the game
after our coalition formation stage. In addition, our framework is not specific
to federated learning, as it can be applied regardless of the learning procedure used on the
shared dataset.

To our awareness, \citet{wu22} is the only work considering data sharing
between market competitors. However, the authors do not explain the mechanisms
behind the effects of ML models' quality on the market. Therefore, their
framework can not predict whether a potential coalition will be beneficial for
the company but only indicate the benefits of the coalition post factum. In addition,
they use the marketing model of \citet{r93} instead of the classic economic
models \citep{t88} to model competition, potentially constraining the
applicability. Finally, their notion of the sustainability of coalitions does
not arise from standard concepts in game theory. In contrast, we base our
analysis on the standard Nash equilibrium concept \citep{n51}.

\myparagraph{Market competition and information sharing} Competitive behavior is a
well-studied topic in the economic literature \citep{t88}. We refer to
\citet{s89} for an extensive review of the theory of competition. From a more
applied perspective, \citet{r07} discuss how to test the behavior of firms
empirically, while \citet{b19} overview the modern empirical studies of
industrial organization. A rich economic literature
studies the incentives of competing firms to share information \citep{r96,
be19}. For example, demand \citep{v84} or costs \citep{f84} might be
unobservable, and firms might share information about them to improve their
decisions. In contrast to these studies, in our model, firms share information
to improve their products or optimize their costs, not to reduce uncertainty.
Thus, the mechanisms and conclusions of these earlier works and our work
differ.

\section{Data-sharing problem}
\label{sec:setting}
In this section, we provide a general formulation of the data-sharing problem,
consisting of three components: market model, data impact, and collaboration
scheme. The market model outlines
consumers' and firms' consumption and production decisions. Data impact explains how additional
data affects machine learning model quality. Lastly, the collaboration scheme details the data-sharing negotiation process among the
competitors. Given a specific application, these three components can be derived/modeled by market research or sales teams, operation management and data scientists, and mediators, respectively.

We begin with an illustrative example that will help us to clarify the abstract
concepts used in our framework and then introduce the three framework components in
their full generality.

\subsection{Running example}

Consider a city with a taxi market dominated by a few firms. Each firm collects
data (e.g., demand for taxis and traffic situation) to train a machine learning
model for optimizing driver scheduling and other internal processes, which is
crucial for improving scheduling quality, reducing costs, or enhancing
services. Each company can use only its own data or take part in a
collaborative training procedure, like federated learning, with its
competitors. Collaboration can substantially improve its machine learning
model, but it may also strengthen the models of its competitors. Thus, the company
must carefully evaluate the impact of data-sharing on its downstream profits.

\subsection{Market model}
\label{sec:general_market_model}

The market model encompasses consumer actions (demand factors) and firm
production actions (supply factors). We consider a market with $m$ firms, $F_1,
\dots, F_m$, each producing $q_i \in \mathbb{R}_+$ units of good $G_i$ and
offering them to consumers at price $p_i \in \mathbb{R}_+$, where $\mathbb{R}_+
= [0, \infty)$. In our example, $G_1, \dots, G_m$ represent taxi services from
different companies, with consumers being city travelers, $q_i$ are the total
number of kilometers serviced by company $i$, and $p_i$ are the prices per
kilometer.

In this setting, prices and quantities are $2 m$ unknown variables. Correspondingly, we need
$2 m$ constraints to describe the consumers' and firms' market decisions, which
we derive from the consumers' and the firms' rationality. In contrast to
standard market models, our framework allows product utilities and costs to depend
on the qualities of the machine learning models of the firms $\vec{v} = (v_1,
\ldots, v_m)$. This dependence models the impact of the learned models on the services and production pipelines of the firms.

\myparagraph{Consumers' behavior} Each consumer $j$ optimizes their utility
$u^j(\vec{g}^j, \vec{q}^j, \vec{v})$ by purchasing goods given the market
prices. We assume that consumers cannot influence prices, as there are many
consumers and they do not cooperate.

The consumer's utility depends on three factors. The first one is the consumed
quantities $\vec{q}^j = (q_1^j, \ldots, q_m^j)$ of products $G_1, \ldots, G_m$.
The second is the machine learning models' qualities $\vec{v}$, as these may
impact the utility of the corresponding products. The last one is consumed
quantities $\vec{g}^j = (g_1^j, \ldots, g_k^j)$ of goods outside the considered
market (e.g., consumed food in our taxi example). Assuming that each consumer
$j$ can only spend a certain budget $B^j$ on all goods, one obtains the
following optimization problem
\begin{equation}
  \label{eq:cons_problem}
  \max_{\vec{g}^j, \vec{q}^j} u^j(\vec{g}^j, \vec{q}^j, \vec{v}) \text{ s.t. }
  \sum_{l=1}^k \tilde{p}_l g_l^j + \sum_{i=1}^m p_i q_i^j \le B^j,
\end{equation}
where $\tilde{\vec{p}}$ are the outside products' prices (which we consider
fixed). The solution to this problem $q^{j, *}_i(\vec{p}, \vec{v})$ determines
the aggregate demanded quantity of goods
\begin{equation}
  \label{e:agg-demand}
  q_i(\vec{p}, \vec{v}) \defeq \sum_j q^{j, *}_i(\vec{p}, \vec{v}).
\end{equation}
The functions $q_i(\vec{p}, \vec{v})$ (known as the demand equations) link the
prices $p_i$ and the demanded quantities $q_i$ and constitute the first $m$
restrictions in our setting.

\myparagraph{Firms' behavior} Firms maximize their expected profits, the
difference between revenue and cost,
\begin{equation}
  \label{eq:profit}
  \varPi_i^e = \E_{\vec{v}}(p_i q_i - C_i(q_i, v_i)).
\end{equation}
Here $C_i(q_i, v_i)$ is the cost of producing $q_i$ units of good for the firm
$F_i$, and the expectation is taken over the randomness in the models' quality,
influenced by the dataset and the training procedure. Note that the model quality
is often observed only after testing the model in production (at test time).
Therefore, we assume that the firms reason in expectation instead. In our
running example, $C_i$ depends on driver wages, gasoline prices, and scheduling
quality.


The firms may act by either deciding on their produced quantities, resulting in what is known as the Cournot competition model \citep{c38}; or on their prices, resulting in the Bertrand competition model \citep{b83}. As the demand equations (\ref{e:agg-demand})
may interrelate prices and quantities for various products, firms strategically
consider their competitors' actions, resulting in a Nash equilibrium. The equilibrium conditions provide another $m$ constraints, enabling us to
solve the market model entirely.

\subsection{The impact of data on the market}
\label{sec:data_effect}

Each company $F_i$ possesses a dataset $D_i$ that can be used to train a
machine learning model (e.g., trip data in the taxi example) and may opt to
participate in a data-sharing agreement with some other firms. Denote the final
dataset the company acquired  by $D^c_i$. The company then uses the dataset to
train a machine learning model. We \emph{postulate two natural ways that the
model quality $v_i$ can impact a company}. The first one is by \emph{reducing the company's
production costs} $C_i(q_i, v_i)$, for example, by minimizing time in traffic
jams. The second is by \emph{increasing the utility of its products} $q_i(\vec{p},
\vec{v})$, for example, by minimizing the waiting time for taxi arrival.

\subsection{Collaboration scheme}

Following classic economic logic, we posit that \emph{firms will share data if
this decision increases their expected profits} $\varPi^e_i$. Since firms can
not evaluate the gains from unknown data, we assume that they know about the
dataset characteristics of their competitors (e.g., their number of samples and
distributional information). Although expected profit maximization determines
individual data-sharing incentives, forming a coalition necessitates mutual
agreement. Therefore, the precise game-theoretic formulation of the
data-sharing problem depends on the negotiation process details, such as the
number of participants and full or partial data sharing among firms.

Consider our taxi example. If only two firms are present, it is natural that
they will agree on sharing their full datasets with each other if and only if
they both expect that this will lead to increased profits. Partial data-sharing
will complicate the process, leading to an intricate bargaining process between
the firms. Finally, if many companies are present, the data-sharing decisions
become even more complicated, as a firm needs to reason not only about the
data-sharing benefits but also about the data-sharing decisions of other firms.

\paragraph{Legal and training costs considerations} Other considerations can
also enter into our framework through the collaboration scheme. In particular,
if firms have legal requirements on data usage, they should impose suitable
constraints on possible data-sharing agreements. If the collaborative learning
procedure involves training large models, the firms should negotiate how to
split training costs. For example, they might split the costs equally among all
coalition members or proportionally to the sizes of their datasets.

\section{Example market and data impact models}
\label{sec:costs}
In this section, we instantiate the general framework using a conventional
market model from economic theory and a natural data impact model justified by
learning theory. These models allow us to reason quantitatively about the
data-sharing problem in the following sections, leading to the identification
of several key factors in the data-sharing trade-off. To focus solely on the
data-sharing trade-off, we make several simplifying assumptions: the firms'
data is homogeneous, training costs are negligible, and legal constraints are
not present or are automatically fulfilled.

\subsection{Market model}

In order to make quantitative statements about the firms' actions and profits
within the general model of \cref{sec:general_market_model}, one needs to make
further parametric assumptions on utilities $u^j$ and cost functions $C_i$ and
specify the competition game. To this end, we use a specific utility and cost
model standard in the theoretical industrial organization literature
\citep{t88, c90}. Despite its simplicity, this model effectively captures the
basic factors governing market equilibrium for many problems and is often used to obtain
qualitative insights about them.

\subsubsection{Demand}
We assume that, in the aggregate, the behavior of consumers
(\ref{eq:cons_problem}) can be described by a representative consumer with
quasi-linear quadratic utility (QQUM, \citealt{d79})
\begin{equation}
  \label{eq:qqum}
  \max_{g, \vec{q}} u(g, \vec{q}) \defeq
  \sum_{i = 1}^m q_i - \Par*{\frac{\sum_i q_i^2 + 2 \gamma \sum_{i > j}
  q_i q_j}{2}} + g = \vec{\iota}^{\tran} \vec{q} - \frac{\vec{q}^{\tran}
  \mat{G} \vec{q}}{2} + g \text{ s.t. } g + \vec{p}^{\tran} \vec{q} \le B.
\end{equation}
Here, $\vec{\iota} = (1, \dots, 1)^{\tran}$, $\mat{G} = (1 - \gamma) \mathbf{I}
+ \gamma \vec{\iota} \vec{\iota}^{\tran}$ and $g$ is the quantity from a single
outside good.

Additionally, $\gamma \in \Par*{-\frac{1}{m-1}, 1}$ is a measure of
substitutability between each pair of goods: higher $\gamma$ corresponds to
more similar goods. In our running example, $\gamma$ describes the difference
in service of two taxi companies, such as the difference in the cars' quality
or the location coverage. We refer to \citet{c19} for a detailed discussion on
the QQUM model and the plausibility of assuming an aggregate consumer
behavior.

In this case, the exact form of the demand equations (\ref{e:agg-demand}) is
well-known.
\begin{lemma}[\citealt{a17m}]
  \label{l:demand}
  Assume that $\mat{G}^{-1} (\vec{\iota} - \vec{p}) > 0$ and $\vec{p}^{\tran}
  \mat{G}^{-1} (\vec{\iota} - \vec{p}) \le B$. The solution to the consumer
  problem (\ref{eq:qqum}) is
  \begin{align}
    \label{e:inv-demand}
    p_i = 1 - q_i - \gamma \sum_{j \neq i} q_j \iff
    q_i = \frac{1 - \gamma - p_i - \gamma (m - 2) p_i + \gamma
    \sum_{j \neq i} p_j}{(1 - \gamma) (1 + \gamma (m - 1))}.
  \end{align}
\end{lemma}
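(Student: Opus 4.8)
The plan is to solve the consumer's constrained optimization problem~(\ref{eq:qqum}) directly via first-order conditions, then invert the resulting linear system. Since the utility is quasi-linear in the outside good $g$ and the budget constraint binds at the optimum (more consumption is always weakly preferred), I would substitute $g = B - \vec{p}^{\tran} \vec{q}$ into the objective, reducing the problem to the unconstrained maximization of $\vec{\iota}^{\tran} \vec{q} - \frac{1}{2} \vec{q}^{\tran} \mat{G} \vec{q} + B - \vec{p}^{\tran} \vec{q}$ over $\vec{q}$. The matrix $\mat{G} = (1-\gamma)\mathbf{I} + \gamma \vec{\iota}\vec{\iota}^{\tran}$ is positive definite on the relevant range of $\gamma$, so this is a strictly concave program and the first-order conditions are both necessary and sufficient.

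Next I would write out the stationarity condition. Differentiating the reduced objective with respect to $\vec{q}$ gives $\vec{\iota} - \mat{G}\vec{q} - \vec{p} = \vec{0}$, which rearranges to the inverse demand $\mat{G}\vec{q} = \vec{\iota} - \vec{p}$, i.e.\ componentwise $p_i = 1 - q_i - \gamma\sum_{j\neq i} q_j$ after reading off row $i$ of $\mat{G}\vec{q}$. This establishes the left-hand equation of~(\ref{e:inv-demand}). To obtain the direct demand on the right, I would invert $\mat{G}$ explicitly. Because $\mat{G}$ is a rank-one update of a scaled identity, the Sherman--Morrison formula yields a clean closed form: $\mat{G}^{-1} = \frac{1}{1-\gamma}\bigl(\mathbf{I} - \frac{\gamma}{1 + \gamma(m-1)} \vec{\iota}\vec{\iota}^{\tran}\bigr)$. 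Applying this to $\vec{\iota} - \vec{p}$ and simplifying the scalar coefficients should reproduce the stated formula $q_i = \frac{1 - \gamma - p_i - \gamma(m-2)p_i + \gamma\sum_{j\neq i} p_j}{(1-\gamma)(1+\gamma(m-1))}$.

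Finally I would verify that the two stated hypotheses guarantee the formal solution is genuinely the optimizer of the original constrained problem. The condition $\mat{G}^{-1}(\vec{\iota} - \vec{p}) > 0$ ensures the candidate $\vec{q} = \mat{G}^{-1}(\vec{\iota}-\vec{p})$ lies in the feasible nonnegative orthant $\mathbb{R}_+^m$, so the interior stationary point is admissible and no quantity is forced to zero by a boundary constraint. The condition $\vec{p}^{\tran}\mat{G}^{-1}(\vec{\iota}-\vec{p}) \le B$ ensures the spending on market goods does not exhaust the budget, leaving $g = B - \vec{p}^{\tran}\vec{q} \ge 0$, so the outside good is consumed in nonnegative quantity and the budget substitution was legitimate.

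I expect the only real obstacle to be the algebraic bookkeeping in the Sherman--Morrison inversion and the subsequent simplification into the precise numerator/denominator form displayed in~(\ref{e:inv-demand}); the conceptual structure (bind the budget, solve a concave quadratic, invert a rank-one-perturbed matrix, check feasibility) is routine. Care is needed to confirm that $1 + \gamma(m-1) \neq 0$ throughout the admissible range $\gamma \in \bigl(-\tfrac{1}{m-1}, 1\bigr)$, which is exactly where $\mat{G}$ stays positive definite and invertible, so the denominator never vanishes.
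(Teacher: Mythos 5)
Your proposal is correct and follows essentially the same route as the paper's proof: relax the nonnegativity constraints, derive the first-order conditions $\mat{G}\vec{q} = \vec{\iota} - \vec{p}$ (the paper does this via KKT with the multiplier $\lambda = 1$ from the $g$-equation, equivalent to your direct budget substitution), solve the linear system, and observe that the two hypotheses guarantee $\vec{q} \ge 0$ and $g \ge 0$ so the relaxed solution is feasible for the original problem. The only cosmetic difference is that you invert $\mat{G}$ explicitly via Sherman--Morrison (which indeed yields $\mat{G}^{-1} = \frac{1}{1-\gamma}\bigl(\mathbf{I} - \frac{\gamma}{1+\gamma(m-1)}\vec{\iota}\vec{\iota}^{\tran}\bigr)$ and the stated formula), whereas the paper solves the same system by summing the first-order conditions to eliminate $\sum_j q_j$ --- both computations are routine and equivalent.
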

The technical conditions $\mat{G}^{-1} (\vec{\iota} - \vec{p}) > 0$ and
$\vec{p}^{\tran} \mat{G}^{-1} (\vec{\iota} - \vec{p}) \le B$ ensure that the
consumers want to buy at least a bit of every product and do not spend all of
their money in the considered market, respectively. For completeness, we prove
\cref{l:demand} in \cref{sec:demand-proof}.

\subsubsection{Supply}
\label{sec:supply}
We assume that the cost functions (\ref{eq:profit}) are linear in the
quantities
\begin{equation}
  \label{e:profit}
  \varPi_i^e = \E(p_i q_i - c_i q_i),
\end{equation}
where the parameter $c_i$ depends on the quality of the machine learning model.
We denote $c_i^e = \E_{D_i^c}(c_i)$.

\begin{remark}
  Here we assume that the quality of the machine learning model only affects the
  production costs, and not the utilities. However, in this market model, it is
  equivalent to assuming that the machine learning model affects the consumers'
  utility via increasing the effective quantities of the produced goods.
  Specifically, assume that each product has an internal quality $w_i$ and it
  increases the effective amount of good in the consumer's utility
  (\ref{eq:qqum}), resulting in utility $u(g, w_1 q_1, \dots, w_m q_m)$. This
  model is equivalent to the model above after substituting $q_i$, $p_i$, and
  $c_i$ with their effective versions $\tilde{q}_i = w_i q_i$, $\tilde{p}_i =
  p_i / w_i$, and $\tilde{c}_i = c_i / w_i$.
\end{remark}

We now derive the remaining constraints on the prices and costs. Note that
these equilibria depend on the expected costs $c^e_i$ and hence on the
quality of the machine learning models.

\myparagraph{Cournot competition} Each firm acts by choosing an output level
$q_i$, which determines the prices (\ref{e:inv-demand}) and expected profits
(\ref{e:profit}). The next standard lemma describes the Nash equilibrium of
this game.

\begin{lemma}
  \label{l:cournot}
  Assume that the expected marginal costs are equal to $c^e_1, \ldots, c^e_m$, and
  companies maximize their profits (\ref{e:profit}) in the Cournot competition
  game. If $\, \forall i \, (2 - \gamma) (1 - c_i^e) > \gamma \sum_{j \neq i}
  (c_i^e - c_j^e)$, Nash equilibrium quantities and profits are equal to
  \[
    q_i^* = \frac{2 - \gamma - (2 + \gamma (m - 2)) c_i^e + \gamma \sum_{j \neq
    i} c_j^e}{(2 - \gamma) (2 + (m - 1) \gamma)}, \ \varPi_i^e = (q_i^*)^2.
  \]
\end{lemma}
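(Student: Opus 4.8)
The plan is to treat this as a standard concave Cournot game and solve the resulting system of first-order conditions via an explicit matrix inversion. First I would fix the competitors' quantities $\vec{q}_{-i}$ and write firm $i$'s profit by substituting the inverse demand (\ref{e:inv-demand}) into (\ref{e:profit}), obtaining $\varPi_i^e = \Par*{1 - q_i - \gamma \sum_{j \neq i} q_j - c_i^e} q_i$. Since the coefficient of $q_i^2$ is $-1 < 0$, this is strictly concave in $q_i$, so firm $i$'s best response is unique and equals the interior stationary point whenever that point is nonnegative. The first-order condition $\partial \varPi_i^e / \partial q_i = 1 - 2 q_i - \gamma \sum_{j \neq i} q_j - c_i^e = 0$ rearranges to $2 q_i + \gamma \sum_{j \neq i} q_j = 1 - c_i^e$, which I would collect over all $i$ into the linear system $\mat{A} \vec{q} = \vec{\iota} - \vec{c}^e$, where $\mat{A} = (2 - \gamma) \mathbf{I} + \gamma \vec{\iota} \vec{\iota}^{\tran}$ and $\vec{c}^e = (c_1^e, \dots, c_m^e)^{\tran}$.

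The second step is to invert $\mat{A}$. Its eigenvalues are $2 - \gamma$ (multiplicity $m - 1$) and $2 + (m - 1) \gamma$ (simple), both strictly positive on $\gamma \in \Par*{-\frac{1}{m-1}, 1}$, so $\mat{A}$ is invertible. Applying the Sherman--Morrison formula gives $\mat{A}^{-1} = \frac{1}{2 - \gamma} \mathbf{I} - \frac{\gamma}{(2 - \gamma)(2 + (m - 1) \gamma)} \vec{\iota} \vec{\iota}^{\tran}$, and reading off the $i$-th coordinate of $\vec{q}^* = \mat{A}^{-1} (\vec{\iota} - \vec{c}^e)$, then clearing to the common denominator $(2 - \gamma)(2 + (m - 1) \gamma)$ and using $\sum_j c_j^e = c_i^e + \sum_{j \neq i} c_j^e$, should reproduce the claimed numerator $2 - \gamma - (2 + \gamma (m - 2)) c_i^e + \gamma \sum_{j \neq i} c_j^e$. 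For the profit I would reuse the first-order condition directly: it yields $p_i - c_i^e = 1 - q_i - \gamma \sum_{j \neq i} q_j - c_i^e = q_i^*$, so the per-firm markup equals the equilibrium quantity and hence $\varPi_i^e = (p_i - c_i^e) q_i^* = (q_i^*)^2$.

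It remains to connect the stated hypothesis to the claim that this stationary point is genuinely the equilibrium. Rearranging $(2 - \gamma)(1 - c_i^e) > \gamma \sum_{j \neq i}(c_i^e - c_j^e)$ is, after exactly the same algebra, the statement that the numerator above is positive; since the denominator is positive throughout the parameter range, this gives $q_i^* > 0$ for every $i$. Because each firm's best response is unique and interior at this profile, no firm can profitably deviate, so $\vec{q}^*$ is a Nash equilibrium.

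The main obstacle is the interplay between the first-order system and the nonnegativity constraints $q_i \ge 0$, rather than the algebra. The first-order conditions only characterize fully interior profiles, so a priori the true equilibrium could place some firms at the boundary $q_i = 0$. The role of the hypothesis is precisely to exclude this regime: it guarantees the candidate $q_i^*$ are strictly positive, so that strict concavity makes the first-order conditions both necessary and sufficient for each firm's (single-valued) best response at the candidate profile. I would conclude that $\vec{q}^*$ is a Nash equilibrium, with full uniqueness following from the single-valuedness of the best-response correspondences together with the invertibility of $\mat{A}$; the care to take is verifying that the positivity furnished by the hypothesis indeed keeps every firm off its boundary.
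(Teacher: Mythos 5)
Your proposal is correct and takes essentially the same route as the paper: both derive the first-order conditions of the strictly concave quadratic profit, solve the resulting linear system for $q_i^*$, identify the lemma's hypothesis with positivity of the numerator (hence interiority of every best response), and obtain $\varPi_i^e = (q_i^*)^2$ — your markup identity $p_i - c_i^e = q_i^*$ is the same computation as the paper's observation that the quadratic profit evaluated at its vertex equals $q_i^*(q_{-i})^2$. The only difference is cosmetic: the paper solves the system by summing the best-response equations and substituting the aggregate back in, while you invert $\mat{A} = (2-\gamma)\mathbf{I} + \gamma \vec{\iota}\vec{\iota}^{\tran}$ via Sherman--Morrison; both yield the identical formula (your closing remark about full uniqueness of the equilibrium would additionally require ruling out boundary fixed points of the truncated best responses, but neither the lemma nor the paper's proof claims uniqueness, so this does not affect correctness).
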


\myparagraph{Bertrand competition} Each firm acts by setting the price $p_i$ for
their product, which determines the quantities (\ref{e:inv-demand}) and expected
profits (\ref{e:profit}). The following lemma describes the Nash equilibrium of
this game.

\begin{lemma}
\label{l:bertrand}
  Assume that the expected marginal costs are equal to $c^e_1, \ldots, c^e_m$, and
  companies maximize their profits (\ref{e:profit}) in the Bertrand competition
  game. If $\, \forall i \, d_1 (1 - c_i^e) > d_3 \sum_{j \neq i} (c_i^e -
  c_j^e)$, Nash equilibrium prices and profits are equal to
  \[
    p_i^* = \frac{d_1 + d_2 c_i^e +  d_3 \sum_{j \neq i} c_j^e}{d_4}, \
    \varPi^e_i = \frac{(1 + \gamma (m + 2)) (p_i^* - c_i^e)^2}{(1 -
    \gamma) (1 + \gamma (m - 1))},
  \]
  where $d_1, \ldots, d_4$ depend only on $\gamma$ and $m$.
\end{lemma}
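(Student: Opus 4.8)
The plan is to solve the Bertrand competition Nash equilibrium by setting up each firm's profit maximization in prices, deriving the first-order conditions, and solving the resulting linear system exploiting its symmetric structure.

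Let me work through this. Each firm chooses $p_i$ to maximize $\varPi_i^e = (p_i - c_i^e) q_i$, where $q_i$ is the demand from the inverse relation in Lemma~\ref{l:demand}. The demand is linear in prices, so the profit is quadratic in $p_i$ with a negative leading coefficient (the own-price coefficient $-(1 + \gamma(m-2))/((1-\gamma)(1+\gamma(m-1)))$ is negative under the parameter restrictions on $\gamma$), guaranteeing a unique interior maximum.

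First I would compute $\partial q_i / \partial p_i$ from the explicit demand equation and write the first-order condition $q_i + (p_i - c_i^e)\partial q_i/\partial p_i = 0$. This gives $m$ linear equations in the $m$ unknowns $p_1, \ldots, p_m$. The key structural observation is that the coefficient matrix of this system has the same symmetric ``$\alpha \mathbf{I} + \beta \vec{\iota}\vec{\iota}^{\tran}$'' form as $\mat{G}$, because every firm enters symmetrically except through its own cost $c_i^e$. I plan to solve this system using the Sherman–Morrison formula (or equivalently by guessing a solution of the form $p_i^* = A + B c_i^e + C\sum_{j\neq i} c_j^e$ and matching coefficients), which yields the claimed closed form with $d_1, \ldots, d_4$ depending only on $\gamma$ and $m$. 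Substituting $p_i^*$ back into the quadratic profit expression, using $\varPi_i^e = (p_i^* - c_i^e)q_i^*$ together with the first-order condition $q_i^* = -(p_i^* - c_i^e)\partial q_i/\partial p_i$, collapses the profit to $\varPi_i^e = (p_i^* - c_i^e)^2 \cdot (-\partial q_i/\partial p_i)$, producing the stated formula.

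The main obstacle will be verifying that the computed equilibrium is genuinely interior and valid, that is, that the prices $p_i^*$ satisfy the technical conditions $\mat{G}^{-1}(\vec{\iota} - \vec{p}) > 0$ of Lemma~\ref{l:demand} so that the demand equations apply with all quantities positive. This is precisely where the hypothesis $\forall i: d_1(1 - c_i^e) > d_3\sum_{j\neq i}(c_i^e - c_j^e)$ enters: I would show that this inequality is equivalent to the positivity of each equilibrium quantity $q_i^* > 0$, confirming that firms indeed supply positive amounts and that no boundary (corner) solution with some $q_i = 0$ arises. The algebraic bookkeeping of inverting the symmetric matrix and simplifying the $d_k$ into clean expressions in $\gamma$ and $m$ is tedious but routine; the conceptual care lies in the equilibrium-existence and positivity argument.
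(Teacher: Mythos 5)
Your proposal is correct and follows essentially the same route as the paper's proof: first-order conditions in prices, solving the symmetric linear system (the paper sums the best responses and substitutes back, which is equivalent to your Sherman--Morrison/ansatz step), exploiting the quadratic structure of profit in $p_i$ (the paper notes it vanishes at $p_i = c_i^e$ and is maximized at $p_i^*$, which is the same as your FOC substitution giving $\varPi_i^e = (-\partial q_i/\partial p_i)(p_i^* - c_i^e)^2$), and interpreting the hypothesis $d_1(1-c_i^e) > d_3\sum_{j\neq i}(c_i^e - c_j^e)$ as positivity of the equilibrium markups and hence quantities. One remark: carried out literally, your computation yields the prefactor $\frac{1+\gamma(m-2)}{(1-\gamma)(1+\gamma(m-1))}$, whereas the lemma statement and the paper's proof write $1+\gamma(m+2)$ --- this appears to be a typo in the paper (check $m=2$, where $-\partial q_i/\partial p_i = 1/(1-\gamma^2)$), and it is immaterial to the downstream collaboration criteria, which compare only the squared markups.
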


In both lemmas, the technical conditions $\forall i \, (2 - \gamma) (1 -
c_i^e) > \gamma \sum_{j \neq i} (c_i^e - c_j^e)$ and $\forall i \, d_1 (1 -
c_i^e) > d_3 \sum_{j \neq i} (c_i^e - c_j^e)$ ensure that every firm produces
a positive amount of good in the equilibrium. We prove these Lemmas in
Sections \ref{sec:cournot-proof} and \ref{sec:bertrand-proof}.

\subsection{Data impact model}
In order to reason strategically about the impact of sharing data with others,
firms need to understand how additional data impacts their costs. Here, we
consider the case where all datasets are sampled from the same
distribution $\mathcal{D}$. While heterogeneity is an important concern in
collaborative learning, we focus on the homogeneous case since heterogeneity is
orthogonal to the incentives arising from market equilibrium considerations.

Homogeneity allows us to consider the expected costs in form $c^e_i =
c^e(n_i^c)$, where $n_i^c = |D^c_i|$ is the number of points the company $i$
has access to. Using the examples below, we motivate the following functional
form for $c^e(n)$
\begin{equation}
  \label{e:data-dep}
  c^e(n) = a + \frac{b}{n^\beta}, \ \beta \in (0, 1].
\end{equation}
In this setting, $\beta$ indicates the \emph{difficulty of the learning task},
higher $\beta$ corresponds to a simpler task \citep{t04}, $a$ represents the
marginal costs of production given a perfect machine learning model, while $a +
b$ corresponds to the cost of production without machine learning
optimizations. Additionally, we assume that $a < 1$ and $\frac{b}{1-a}$ is
small enough, so that the technical requirements of Lemmas \ref{l:demand},
\ref{l:cournot}, and \ref{l:bertrand} are satisfied (see Equations
(\ref{eq:cournot-restriction}) and (\ref{eq:bertrand-restriction})).
Intuitively, this requirement ensures that firms do not exit the market during
the competition stage.

In the examples below, we consider a setting where a company needs to perform
action $\s \in \mathbb{R}^n$ during production that will impact its costs.
However, there is uncertainty in the production process coming from random
noise $\X$ valued in $\mathbb{R}^m$. Thus, the cost of production is a random
function $c(\s, \X)\colon \mathbb{R}^n \times \mathbb{R}^m \to \mathbb{R}_{+}$.

\myparagraph{Asymptotic normality} Assume that the firms use background
knowledge about their production processes in the form of a structural causal
model of $\X$. However, they do not know the exact parameters of the model and
use data to estimate them. Suppose a firm uses the maximum likelihood estimator
to find the parameters and chooses the optimal action $\s_\text{fin}$ based on
this estimate. In this case, the result of optimization
$\E_{\X}(c(\s_\text{fin}, \X))$ will approximately have a generalized
chi-square distribution \citep{j83}, resulting in the following approximation
\begin{equation}
  \label{e:mle}
  \E_{D, \X}(c(\s_\text{fin}, \X)) \approx a + \frac{b}{n}.
\end{equation}
We refer to \cref{sec:asymptotic_normality} for further details. This result
implies the same dependence of the expected marginal costs on the total number
of samples $n$ as \cref{e:data-dep} with $\beta = 1$.

\myparagraph{Stochastic optimization} A similar dependence arises when the
company uses stochastic optimization to directly optimize the expected cost
$c(\s) = \E_{\X} (c(\s, \X))$. If the company uses an algorithm with provable
generalization guarantees (e.g., SGD with a single pass over the data,
\citealt{bubeck2015convex}) and the function $c(\s)$ is strongly convex, the
outcome $\s_\text{fin}$ will satisfy
\[
  \E_D(c(\s_\text{fin}) - c(\s^*)) = \O \Par*{\frac{1}{n}},
\]
where $\s^*$ is the optimal action, resulting in the same dependence as in
the previous paragraph.

\myparagraph{Statistical learning theory} Another justification for the
dependency on the number of data points comes from statistical learning theory.
The observations from this subsection are inspired by similar arguments in
\citet{k22}. Assume that a firm trains a classifier used in the production
process. In this case, the cost $c$ depends on the classifier accuracy $a$, and the
cost overhead will satisfy
\[
  c(a) - c(a^*) \approx -c'(a^*) (a^* - a),
\]
where $a^*$ is the optimal accuracy achievable by a chosen family of
classifiers. If the family of classifiers has a finite VC dimension $d$, a
well-known statistical bound \citep{shalev2014understanding} gives
\[
  a^* - a = \O \Par*{\sqrt{\frac{d}{n}}},
\]
motivating the same dependence as \cref{e:data-dep} with $\beta = 1 / 2$.

\section{Data sharing between two firms}
\label{sec:data_sharing}
Having introduced example market and data impact models, we can specify a
negotiation scheme and quantitatively analyze the data-sharing problem. Through
this, we aim to obtain qualitative insights into how market parameters impact
data-sharing decisions. Following conventional economic logic \citep{t88, c90},
we start with the simplest possible collaboration scheme, in which two
companies make a binary decision of whether to share all their data with each
other. In the next sections, we proceed to the more complicated situations of
partial data sharing and data sharing between many parties.

\myparagraph{Full data sharing between two firms} According to our framework,
both companies compare their expected profits at the Nash equilibrium (from
Lemma \ref{l:cournot} or \ref{l:bertrand}) for two cases, when they collaborate
and when they do not. Then \emph{they share data with each other if and only if
they both expect an increase in profits from this action}.

\begin{remark}
  Notice that we can incorporate some legal requirements into this scheme by
  redefining full data-sharing action. For example, if sharing consent
  constraints, araising from copyright or other data ownership constraint, are
  present, we could assume that the companies will share only a part of their
  dataset for the collaborative learning. While, previously, the competitor
  gets access to all data of the firm $n^\text{share}_{-i} = n_{-i} + n_i$, now
  they will get access to only data of people who agree with data sharing
  $n^\text{share}_{-i} = n_{-i} + n_{i, \text{consent}}$.
\end{remark}

For Cournot competition, \cref{l:cournot} and \cref{e:data-dep} give the
following collaboration criterion
\[
  \forall i \quad \varPi_\text{share}^e > \varPi_{i, \text{ind}}^e \iff
  (2 - \gamma) (n_i^{-\beta} - (n_1 + n_2)^{-\beta}) > \gamma (n_{-i}^{-\beta}
  - n_i^{-\beta}),
\]
where $n_{-i}$ is the size of the data of the player that is not $i$,
$\varPi_{i, \text{share}}^e$ is the expected profit in collaboration, and
$\varPi_{i, \text{ind}}^e$ is the expected profit when no collaboration occurs.

Similarly, in the case of Berntrand competition, \cref{l:bertrand} gives
\[
  \forall i \quad \varPi_\text{share}^e > \varPi_{i, \text{ind}}^e \iff
  (2 - \gamma - \gamma^2) (n_i^{-\beta} - (n_1 + n_2)^{-\beta}) > \gamma
  (n_{-i}^{-\beta} - n_i^{-\beta}).
\]

The theorem below describes the properties of these criteria (see proof in
\cref{sec:main_proof}).

\begin{theorem}
  \label{t:main}
  In the case of $\gamma \le 0$, it is profitable for the firms to
  collaborate. In the case of $\gamma > 0$, there exists a value
  $x_t(\gamma, \beta)$, where $t \in \{\text{Bertrand}, \text{Cournot}\}$ is
  the type of competition, such that, for the firm $i$, it is profitable to
  collaborate only with a competitor with enough data:
  \[
    \varPi^e_\text{share} > \varPi^e_{i, \text{ind}} \iff \frac{n_{-i}}{n_1 +
    n_2} > x_t(\gamma, \beta).
  \]
  The function $x_t$ has the following properties:
  \begin{enumerate}
    \item $x_t(\gamma, \beta)$ is increasing in $\gamma$.
    \item $x_\text{Bertrand} \ge x_\text{Cournot}$.
    \item $x_t(\gamma, \beta)$ is increasing in $\beta$.
  \end{enumerate}
\end{theorem}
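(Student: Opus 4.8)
The plan is to collapse both collaboration criteria into a single scalar inequality and then analyze one real function. Writing $N = n_1 + n_2$ and dividing the stated criterion by $N^{-\beta}$, the data split enters only through the competitor's share $r = n_{-i}/N \in (0,1)$ (with $n_i/N = 1-r$), so in both competition modes the criterion becomes $f_t(r) > 0$, where
\[
  f_t(r) \defeq (A_t + \gamma)(1-r)^{-\beta} - \gamma\, r^{-\beta} - A_t,
\]
with $A_t = 2 - \gamma$ (Cournot) and $A_t = 2 - \gamma - \gamma^2$ (Bertrand), so that $A_t + \gamma$ equals $2$ and $2-\gamma^2$ respectively. Since $f_t$ depends on the split only through $r$, establishing the threshold form of the theorem reduces to showing $f_t$ has a single sign change in $r$, with threshold $x_t(\gamma,\beta)$.

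First I would dispose of $\gamma \le 0$. Because $A_t > 0$ and $A_t + \gamma > 0$ on $\gamma \in (-1,1)$, and $(1-r)^{-\beta} > 1$, one gets the crude bound $f_t(r) > (A_t+\gamma) - \gamma r^{-\beta} - A_t = \gamma(1 - r^{-\beta})$; for $\gamma \le 0$ the right-hand side is $\ge 0$ since $r^{-\beta} > 1$, so $f_t > 0$ for every $r$ and both firms always benefit. For $\gamma > 0$ I would get the threshold from monotonicity: $f_t'(r) = \beta\left[(A_t+\gamma)(1-r)^{-\beta-1} + \gamma r^{-\beta-1}\right] > 0$, while $f_t \to -\infty$ as $r \to 0^+$ and $f_t \to +\infty$ as $r \to 1^-$, giving a unique root $x_t(\gamma,\beta)$ with $f_t(r) > 0 \iff r > x_t$. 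A useful by-product is $f_t(1/2) = A_t(2^\beta - 1) > 0$, forcing $x_t < 1/2$.

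The three monotonicity claims then follow by comparative statics on the root, always using $f_t' > 0$. Property (1) reduces to the sign of $\partial_\beta$-free derivative $\partial_\gamma f_t$ at the root: differentiation gives $\partial_\gamma f_\text{Cournot} = 1 - r^{-\beta} < 0$ and $\partial_\gamma f_\text{Bertrand} = 2\gamma(1 - (1-r)^{-\beta}) + (1 - r^{-\beta}) < 0$ on $(0,1)$, so raising $\gamma$ lowers $f_t$ and pushes the root rightward, hence $x_t$ increases in $\gamma$. Property (2) follows from $f_\text{Cournot}(r) - f_\text{Bertrand}(r) = \gamma^2\left((1-r)^{-\beta} - 1\right) > 0$: at $x_\text{Cournot}$ we get $f_\text{Bertrand} < 0$, and since $f_\text{Bertrand}$ is increasing with root $x_\text{Bertrand}$, this gives $x_\text{Bertrand} > x_\text{Cournot}$.

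Property (3) is the step I expect to be the main obstacle, since $\partial_\beta f_t$ does not have a fixed sign on $(0,1)$ and must be evaluated at the root. Setting $u = (1-r)^{-\beta}$ and $w = r^{-\beta}$, a short computation yields $\partial_\beta f_t = \beta^{-1}\left[(A_t+\gamma)\,u \ln u - \gamma\, w \ln w\right]$, so I need $(A_t+\gamma)u\ln u < \gamma w \ln w$ at the root. The key observation is that the root equation $f_t = 0$ rearranges to $(A_t+\gamma)u = A_t \cdot 1 + \gamma w$, i.e.\ $u$ is the convex combination $\tfrac{A_t}{A_t+\gamma}\cdot 1 + \tfrac{\gamma}{A_t+\gamma} w$ of $1$ and $w$ (positive weights when $\gamma > 0$). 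Since $\phi(x) = x\ln x$ is strictly convex with $\phi(1) = 0$, Jensen's inequality gives $(A_t+\gamma)\phi(u) < A_t\phi(1) + \gamma\phi(w) = \gamma w \ln w$, strict because $u \neq w$. Hence $\partial_\beta f_t < 0$ at the root, and by the implicit function theorem together with $f_t' > 0$ the threshold $x_t$ increases in $\beta$. Everything else is sign-chasing once the problem is reduced to the one-dimensional function $f_t$; the convexity argument is the crux.
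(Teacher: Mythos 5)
Your proposal is correct and, at the level of overall strategy, parallels the paper's proof: both reduce the criterion to a one-variable break-even function of the data share $x = n_{-i}/(n_1+n_2)$, establish a single sign change, and obtain properties 1 and 2 from the sign of $\partial_\gamma f$ and of $f_\text{Cournot} - f_\text{Bertrand}$ (the paper's normalization multiplies through by $x^\beta(1-x)^\beta$ to get $f_\text{Cournot}(x) = 2x^\beta - \gamma(1-x)^\beta - (2-\gamma)x^\beta(1-x)^\beta$, while you keep negative powers; the two are equivalent). The genuine divergence is property 3, the step you correctly flag as the crux. The paper argues discretely: it writes the root in fixed-point form $x(\beta) = \bigl(\gamma(1-x(\beta))^\beta/(2-(2-\gamma)(1-x(\beta))^\beta)\bigr)^{1/\beta}$, shows $f_\text{Cournot}(x(\beta),\gamma,\beta+\epsilon) < 0$ via concavity of $t \mapsto t^{\beta/(\beta+\epsilon)}$, and declares the Bertrand case ``similar.'' You instead differentiate in $\beta$, compute $\partial_\beta f_t = \beta^{-1}\bigl[(A_t+\gamma)u\ln u - \gamma w\ln w\bigr]$ with $u = (1-r)^{-\beta}$ and $w = r^{-\beta}$, observe that the root equation makes $u$ a strict convex combination of $1$ and $w$, and apply Jensen's inequality to the strictly convex $\phi(x) = x\ln x$ with $\phi(1) = 0$; the implicit function theorem (legitimate since you verified $\partial_r f_t > 0$) then yields $x_t'(\beta) > 0$. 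Both are interpolation arguments at heart, but your route buys three things the paper's does not: the single parameterization $A_t$ handles Cournot and Bertrand in one pass rather than separately; the monotonicity of the criterion function in $x$, which the paper merely asserts, is explicitly computed; and you get the by-product $x_t < 1/2$ from $f_t(1/2) = A_t(2^\beta - 1) > 0$. One cosmetic slip: strictness in your Jensen step comes from the distinctness of the endpoints, $1 \neq w$ (guaranteed by $w = r^{-\beta} > 1$), together with the interior weights, not from ``$u \neq w$'' per se; this does not affect the conclusion.
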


\myparagraph{Discussion}
The theorem above indicates that the \emph{firms are more likely to collaborate
when either the market is less competitive} (properties 1 and 2) \textit{or the
learning task is harder} (property 3). Indeed, the threshold $x$ becomes
smaller when the products are less similar ($\gamma$ is bigger), making the
market less competitive. Similarly, $x$ becomes smaller in the Cournot
setting since it is known to be less competitive than the Bertrand one \citep{s89}.
Finally, when the learning task is harder ($\beta$ is smaller), $x$ decreases,
making collaboration more likely. Intuitively, it happens because the decrease
in cost (\ref{e:data-dep}) from a single additional data point is higher for
smaller $\beta$.

In the supplementary material, we explore several extensions
for the two firms case. In \cref{s:diff-costs} and \cref{s:diff-demand}, we
look at different cost and utility functions, respectively. In
\cref{s:asym-costs}, we consider the case where $a$, $b$ and/or $\beta$ might
differ among firms and derive an analog of \cref{t:main}. In
\cref{s:heterogeneity}, we consider the data-sharing problem with two companies
with heterogeneous data in the context of mean estimation. In
\cref{sec:welfare}, we look at the welfare implications of data sharing.

\section{Partial data sharing}
\label{sec:partial_sharing}
In this section, we study a two-companies model in which companies can share
any fraction of their data with competitors. For brevity, we only study this
model in the case of Cournot competition. In this setup, each firm $F_i$
chooses to share a fraction $\lambda_i \in [0, 1]$ of data with its competitor
and expects its costs to be
\[
  c_i^e = c^e(n_i + \lambda_{-i} n_{-i}) = a + b (n_i + \lambda_{-i}
  n_{-i})^{-\beta},
\]
where $\lambda_{-i}$ is the fraction of data shared by the company that is not
$i$.

\begin{remark}
  Notice that we can incorporate some legal requirements into this scheme by
  constraining the action spaces of participants. For example, one can
  implement sharing consent constraints, araising from copyright or other data
  ownership constraint, by constraining the firms’ choices of $\lambda$. While,
  previously, the firms were able to share all data $\lambda \in [0, 1]$, now
  they can share only data of people who agree with data sharing $\lambda \in
  [0, \lambda_\text{consent}]$.
\end{remark}

We will use the Nash bargaining solution
\citep{b86} to describe the bargaining process between the firms. Namely, the
firms will try to compromise between themselves by maximizing the following
Nash product
\begin{equation}
  \label{e:bargaining}
  \max_{\lambda_i \in [0, 1]} (\varPi^e_1(\lambda_1,
  \lambda_2) - \varPi^e_1(0, 0)) (\varPi^e_2(\lambda_1, \lambda_2) -
  \varPi^e_2(0, 0)) \text{ s.t. } \forall i \ \varPi^e_i(\lambda_1, \lambda_2)
  - \varPi^e_i(0, 0) \ge 0.
\end{equation}
The Nash bargaining solution is a commonly used model for two-player
bargaining, as it results in the only possible compromise satisfying invariance
to the affine transformations of profits, Pareto optimality, independence of
irrelevant alternatives, and symmetry \citep{b86}. The following theorem
describes the firms' behavior in this setup (see proof in
\cref{sec:bargaining-proof}).

\begin{theorem}
  \label{t:bargaining}
  W.l.o.g. assume that $n_1 \ge n_2$. Additionally, assume $(1 - a) \gg b$ (in
  particular, $b < 5 (1 - a)$). The solution $(\lambda^*_1, \lambda^*_2)$ to the
  problem (\ref{e:bargaining}) is
  \begin{align*}
    \lambda_1^* &= \tilde{\lambda}_1 + \O\Par*{\frac{b}{1 - a}}, \ \lambda_2^* =
    1, \text{ where}\\
    \tilde{\lambda}_1 &= 
    \begin{cases}
      \frac{n_2}{n_1} \Par[\Big]{\Par[\Big]{1 - \frac{4 + \gamma^2}{4 \gamma}
      \Par*{\frac{n_2}{n_1}}^\beta \Par[\Big]{1 - \Par*{\frac{n_1}{n_1 +
      n_2}}^{\beta}}}^{-1/\beta} - 1},
      & \frac{4 + \gamma^2}{n_1^\beta} \le \frac{4 \gamma}{n_2^\beta} +
      \frac{(2 - \gamma)^2}{(n_1 + n_2)^\beta},\\
      1& \text{otherwise}.
    \end{cases}
  \end{align*}
  Moreover, $\tilde{\lambda}_1$ is a decreasing function in $\gamma$ and
  $\beta$, and $\tilde{\lambda}_1 = \frac{4 + \gamma^2}{4 \gamma}
  \Par*{\frac{n_2}{n_1}}^{\beta + 2} (1 + \o(1))$ when $n_2 / n_1 \to 0$.
\end{theorem}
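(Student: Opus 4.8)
The plan is to reduce the bargaining problem to a clean two–variable optimization and then control the error coming from the regime $b \ll 1-a$. First I would specialize \cref{l:cournot} to $m=2$, where $\varPi_i^e=(q_i^*)^2$ with $q_i^*=(2-\gamma-2c_i^e+\gamma c_{-i}^e)/(4-\gamma^2)$, and substitute $c_i^e=a+bf_i$, writing $f_1=(n_1+\lambda_2 n_2)^{-\beta}$ and $f_2=(n_2+\lambda_1 n_1)^{-\beta}$ (so firm $i$'s cost is lowered only by the fraction $\lambda_{-i}$ that its rival shares). Setting $A_i=(2-\gamma)(1-a)-2bf_i+\gamma b f_{-i}$, each participation gain factors as a difference of squares $\varPi_i^e(\lambda)-\varPi_i^e(0,0)=(A_i+A_i^0)(A_i-A_i^0)/(4-\gamma^2)^2$, where $A_i^0$ is the value at $\lambda=0$. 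Introducing the cost reductions $\delta_1=n_1^{-\beta}-(n_1+\lambda_2 n_2)^{-\beta}$ and $\delta_2=n_2^{-\beta}-(n_2+\lambda_1 n_1)^{-\beta}$, one gets $A_i-A_i^0=2b\delta_i-\gamma b\delta_{-i}$ and $A_i+A_i^0=2(2-\gamma)(1-a)+\O(b)$. Hence, to leading order in $b/(1-a)$, the Nash product equals a positive constant times $h(\delta_1,\delta_2)=(2\delta_1-\gamma\delta_2)(2\delta_2-\gamma\delta_1)$, and $(\lambda_1,\lambda_2)$ maps bijectively and monotonically onto the box $\delta_1\in[0,\delta_1^{\max}]$, $\delta_2\in[0,\delta_2^{\max}]$, where $\delta_1^{\max}=n_1^{-\beta}-(n_1+n_2)^{-\beta}$ and $\delta_2^{\max}=n_2^{-\beta}-(n_1+n_2)^{-\beta}$; since $n_1\ge n_2$ we have $\delta_1^{\max}\le\delta_2^{\max}$.

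Next I would solve the leading-order program $\max h$ under the participation constraints $2\delta_i\ge\gamma\delta_{-i}$, focusing on $\gamma>0$. The Hessian of $h$ has determinant $16\gamma^2-(4+\gamma^2)^2=-(4-\gamma^2)^2<0$, so $h$ has no interior maximizer and the optimum lies on the boundary. On each edge $h$ is a downward parabola (leading coefficient $-2\gamma$); the edges $\delta_1=0$ and $\delta_2=0$ are infeasible except at the origin, and maximizing first over $\delta_1$ for fixed $\delta_2$ gives the unconstrained vertex $\delta_1=\kappa\delta_2$ with $\kappa=(4+\gamma^2)/(4\gamma)$, clamped to $\delta_1^{\max}$. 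Substituting this and optimizing over $\delta_2$ shows the global maximizer is $\delta_1=\delta_1^{\max}$ (i.e.\ $\lambda_2^*=1$) and $\delta_2=\min(\kappa\delta_1^{\max},\delta_2^{\max})$. The inequality $\kappa\delta_1^{\max}\le\delta_2^{\max}$ rearranges exactly to $(4+\gamma^2)n_1^{-\beta}\le 4\gamma n_2^{-\beta}+(2-\gamma)^2(n_1+n_2)^{-\beta}$, which is the stated case split; solving $\delta_2=\kappa\delta_1^{\max}$ for $\lambda_1$ (invert $n_2^{-\beta}-(n_2+\lambda_1 n_1)^{-\beta}=\kappa\delta_1^{\max}$ and factor out $n_2^{-\beta}$) yields the closed form $\tilde\lambda_1$, while in the complementary case the maximizer is the corner $\lambda_1=\lambda_2=1$. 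As $\gamma\to0^+$ one has $\kappa\to\infty$, forcing the complementary branch $\tilde\lambda_1=1$, which is the natural full-sharing limit at vanishing competition.

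To upgrade to the exact statement I would exploit that the two coordinates behave differently. Writing $N=G_1G_2$ with the exact gains, $\partial N/\partial\lambda_2$ has the sign of $2A_1G_2-\gamma A_2G_1$, whose leading term is proportional to $(4+\gamma^2)\delta_2-4\gamma\delta_1$; at the located optimum $\delta_2\approx\kappa\delta_1^{\max}$ this is bounded below by $(4-\gamma^2)^2\delta_1^{\max}/(4\gamma)>0$ uniformly over $\lambda_2\in[0,1]$, an $\O(1)$ margin that the $\O(b/(1-a))$ corrections cannot overturn. Hence $N$ is strictly increasing in $\lambda_2$ and $\lambda_2^*=1$ holds exactly (a corner solution, so no interior condition is perturbed). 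Fixing $\lambda_2=1$, the residual problem in $\lambda_1$ has an interior optimum pinned by a first-order condition that is non-degenerate, since $h(\delta_1^{\max},\cdot)$ is strictly concave in $\delta_2$ at its vertex and $\delta_2(\lambda_1)$ is monotone; the implicit function theorem then gives $\lambda_1^*=\tilde\lambda_1+\O(b/(1-a))$. The hypothesis $b<5(1-a)$ is what makes the $\O(b/(1-a))$ corrections genuinely subdominant so that both perturbation arguments go through.

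Finally, for the comparative statics I would argue directly from $\tilde\lambda_1$. Monotonicity in $\gamma$ is immediate: $\tilde\lambda_1$ depends on $\gamma$ only through $\kappa=(4+\gamma^2)/(4\gamma)$, which is decreasing on $(0,1)$, and $\tilde\lambda_1$ is increasing in $\kappa$, so $\tilde\lambda_1$ decreases in $\gamma$. For the small-rival limit I set $r=n_2/n_1$, expand $1-(1+r)^{-\beta}=\beta r(1+\O(r))$ and then $(1-\kappa\beta r^{\beta+1})^{-1/\beta}-1=\kappa r^{\beta+1}(1+\o(1))$, which gives $\tilde\lambda_1=\kappa r^{\beta+2}(1+\o(1))$ as required. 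The main obstacle is monotonicity in $\beta$: writing $\tau=r+\tilde\lambda_1$ so that $\tau^{-\beta}=r^{-\beta}-\kappa(1-(1+r)^{-\beta})$, I would differentiate this relation implicitly in $\beta$ and show $\partial\tau/\partial\beta<0$. This requires carefully signing an expression that mixes the terms $r^{-\beta}\ln r$, $(1+r)^{-\beta}\ln(1+r)$, and $\ln\tau$ over the full range $r\in(0,1]$, $\beta\in(0,1]$, and this is where the genuine analytic work lies.
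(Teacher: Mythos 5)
Your argument through the identification of the optimum and the closed form for $\tilde\lambda_1$ is, in substance, the paper's own proof in \cref{sec:bargaining-proof} written in different coordinates: your direct analysis of $h(\delta_1,\delta_2)=(2\delta_1-\gamma\delta_2)(2\delta_2-\gamma\delta_1)$ over the box corresponds to the paper's change of variables $v_i=2u_i-\gamma u_{-i}$; your ``$N$ is increasing in $\lambda_2$'' step is exactly the paper's directional derivative along $(2,-\gamma)$ in $v$-space (that direction gives $du_1=1$, $du_2=0$, i.e.\ it is precisely an increase of $\lambda_2$); and your implicit-function-theorem step for $\lambda_1^*$ is the paper's explicit stationary-point bound $\lvert\gamma v_2^s-2v_1(v_2^s)\rvert=O(\xi)$ with $\xi=b/((4-2\gamma)(1-a))$, followed by $v_2^*=\min(v_2^s,v_2^0)$. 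Your case split $\kappa\delta_1^{\max}\le\delta_2^{\max}$ with $\kappa=(4+\gamma^2)/(4\gamma)$, the $\gamma$-monotonicity via $\kappa$ being decreasing on $(0,1)$, and the small-ratio expansion all match the paper.

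The genuine gap is the monotonicity of $\tilde\lambda_1$ in $\beta$, which is one of the theorem's claims and which you explicitly leave open (``this is where the genuine analytic work lies''). Two pieces are missing. First, the within-branch statement: the paper never attempts to sign the mixed-logarithm expression your implicit differentiation of $\tau^{-\beta}=r^{-\beta}-\kappa(1-(1+r)^{-\beta})$ produces. Instead it recasts $\tilde\lambda_1(\beta)\ge\tilde\lambda_1(\beta+\epsilon)$ as the inequality $(u_0-\delta+\delta v_0)^{\alpha}\le u_0^{\alpha}-\delta+\delta v_0^{\alpha}$ with $u_0=((1-x)/x)^{\beta}$, $v_0=(1-x)^{\beta}$, $\delta=(4+\gamma^2)/(4\gamma)$, $\alpha=(\beta+\epsilon)/\beta\ge 1$, and proves it via the auxiliary function $f(u,v)=u^{\alpha}-\delta+\delta v^{\alpha}-(u-\delta+\delta v)^{\alpha}$: one has $f(u,1)=0$, and $\partial f/\partial v\le 0$ exactly when $v\le u-\delta+\delta v$, whose tightest instance at $(u_0,v_0)$ is precisely the non-unit criterion---so the hard signing step reduces to a hypothesis already in force. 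Second, the branch boundary: $\tilde\lambda_1$ is piecewise-defined, so decreasingness in $\beta$ also requires that a pair $(n_1,n_2)$ lying in the non-unit branch at $\beta$ cannot revert to the unit branch ($\tilde\lambda_1=1$) at $\beta+\epsilon$. The paper proves this as a separate lemma, showing the root $x(\beta)$ of the criterion shifts right, again via concavity of $x\mapsto x^{\beta/(\beta+\epsilon)}$. Your proposal does not address this branch-crossing issue at all; without it, even a completed derivative computation on the non-unit branch would not yield the claim as stated.
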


\myparagraph{Discussion}
We can see that the results of \cref{t:bargaining} are similar to those in
\cref{sec:data_sharing}. When there is a big difference between the
amount of data ($n_1 \gg n_2$), the big firm does not collaborate with the
small firm ($\lambda^*_1 = \O((n_2/n_1)^{\beta + 2})$). However, when the
firms have similar amounts of data, the large firm will share almost all of it
($\lambda^*_1 \approx 1$), and the sharing proportion increases with a decrease
in competition (smaller $\gamma$) and an increase in learning task hardness
(smaller $\beta$). In contrast to the previous results, the smaller firm always
shares all its data with the competitor ($\lambda^*_2 = 1$).


\section{Data sharing between many firms}
\label{sec:many_firms}
In this section, we investigate the data-sharing problem for many companies.
For brevity, we only consider the case of Cournot competition. The key
challenge in the case of many companies is the variety of possible
coalitional structures. Moreover, since the companies may have conflicting
incentives and data sharing requires mutual agreement, it is not immediately
clear how to assess the ``plausibility'' of a particular structure.

We discuss one possible way to solve this problem using non-cooperative game
theory \citep{k18}. We assume a coalition formation process between the firms,
model it as a non-cooperative game, and study its Nash equilibria. The
non-cooperative approach often offers a unique \emph{sub-game perfect
equilibrium}, a standard solution notion in sequential games \citep{m95}.
However, this equilibrium depends on the assumptions about the bargaining
process. For this reason, in \cref{sec:other-coals} we also study alternative
formulations: a cooperative setting and two non-cooperative settings with only
one non-singleton coalition.

\myparagraph{Non-cooperative data-sharing game} We order the firms in a
decreasing data size order so that firm $F_1$ has the largest dataset. In this
order, the firms propose forming a coalition with their competitors. First, the
largest company makes up to $2^{m - 1}$ proposals. For any offer, each invited
firm accepts or declines it in the decreasing data size order. If anyone
disagrees, the coalition is rejected, and the first firm makes another offer.
Otherwise, the coalition is formed, and all companies in it leave the game.
After the first company forms a coalition or exhausts all of its offers, the
second firm, in the decreasing data size order, begins to propose, etc.

We use the standard backward induction procedure \citep{m95} to calculate the
sub-game perfect equilibrium of this game. This method solves the game from the
end to the beginning. First, the algorithm looks at all possible states one
step before the end of the game. Since the actions of the last player are
rational, the algorithm can identify these actions accurately. This way, the
algorithm moves one step earlier in the coalition formation process and can now
identify the action taken before this state occurred. The procedure iterates
until the start of the game is reached.

\begin{figure}[b]
  \centering
  \scalebox{0.5}{\input{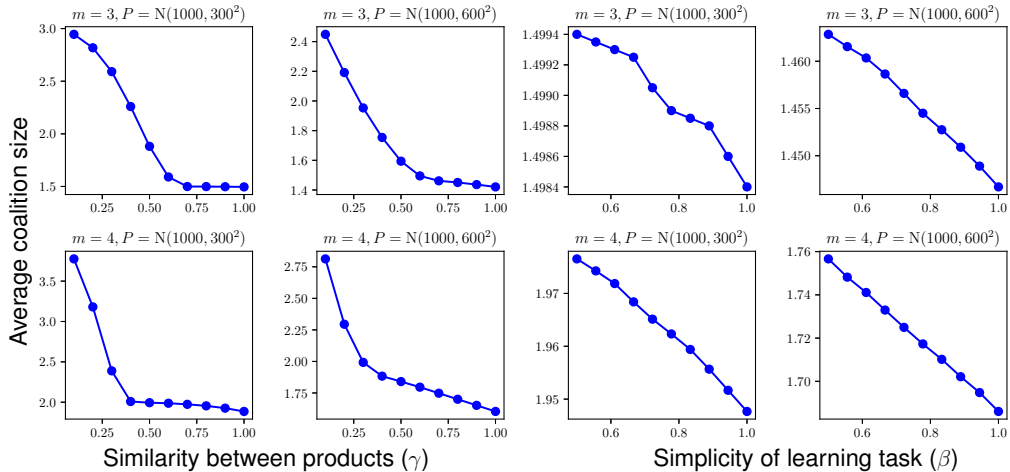}}
  \caption{The dependence of the average coalition size on $\gamma$ (left part)
  and $\beta$ (right part)
  in the synthetic experiments. The $y$-axes report the mean of the average
  size of the coalitions in the equilibrium partition, where mean is taken over
  $10000$ Monte Carlo simulations of the game. See the main text for details.}
  \label{f:experiment}
\end{figure}

\myparagraph{Experiments} We use the procedure described above to empirically
test the conclusions of the previous sections. We sample $m$ dataset sizes, one
for each firm, from a distribution $P = \N(\mu, \sigma^2)$ clipped at $1$ form
below. Then, we calculate the equilibrium of the resulting data-sharing game
and compute the average size (number of companies) of the resulting coalitions
at the equilibrium, as a measure of the extend to which companies collaborate.

We repeat the experiment $10000$ times, for fixed values of $m, \gamma, \beta,
\mu, \sigma$ and compute the mean of the average coalition size over these
runs. Our simulation solves each instance of the data-sharing game exactly and
average it over a big number of independent runs, which makes our results very
precise.

Repeating the experiment for different values of $\gamma$ and $\beta$, we
can observe the dependence of this average coalition size on the two parameters
of interest. In \cref{f:experiment} we plot these dependencies, for different
values of $m \in \{3,4\}$ and $\sigma \in \{300, 600\}$. When varying one of
these parameters, the default values for the other one is $\gamma = 0.8$ and
$\beta = 0.9$. We also provide results for other values of $m$ and $\sigma$ in
\cref{sec:add-coop}.

As we can see, the conclusion of \cref{t:main} transfer to this experiment: the
average size of the coalitions and thus cooperation incentives, are decreasing
in both $\gamma$ and $\beta$, across all experimental setups.

\section{Conclusion}
\label{sec:conclusion}
In this work, we introduced and studied a framework for data-sharing among
market competitors. The framework is modular and consists of three
components---market model, data impact, and collaboration scheme---which can be
specified for any particular application. To examine the effects of various
parameters, we instantiated the framework using a conventional market model and
a natural model of data impact grounded in learning theory. We then studied
several data-sharing games within these models.

Our findings indicate that the characteristics of competition may have a
significant effect on the data-sharing decisions. Specifically, we found that
higher product differentiation generally increases the willingness for
collaboration, as does the complexity of the learning task. Interestingly, our
results suggests that the firms might not want to participate in the federated
learning even in the absence of privacy concerns. On the other hand, we also
predict that data-sharing collaborations might emerge between competitors even
without any external regulations. We hope our study will inspire further
in-depth investigations into the nuanced trade-offs in data sharing, allowing
competition and collaboration to coexist in data-driven environments.

\paragraph{Limitations and future work} While our general framework can
describe many data-sharing settings, the quantitative results rely on several
important assumptions. First, we assume that the firms' data are homogeneous.
We expect that designing a suitable model in the case of significant
heterogeneity  is a significant challenge orthogonal to the focus of our work
\citep[e.g.,][]{gulrajani2020search}. Moreover, in some cases, additional data
from a different distribution may damage model performance, yielding orthogonal
data-sharing considerations \citep{d21}.

Second, we assume that legal constraints are not present or are automatically
fulfilled in all our settings. However, we see the evaluation of AI regulatory
frameworks as an important direction for future work. The same applies to
training cost considerations.

Finally, we do not consider sociological aspects, such as reputational effects
(e.g., reputation loss due to a refusal to share data or a poorly communicated
decision to share data). Due to the inherent non-rivalry of data \citep{k22},
sociological modeling, e.g., the theory of collective action, may provide
further valuable insights into the data-sharing problem.

We see our contributions as mostly conceptual and do not aim to provide a fully
realistic model that can directly inform practitioners about the benefits of
data-sharing decisions. However, we hope our results will remain qualitatively
valid in real-world settings since we used a market model widely adopted in the
economic theoretical literature \citep{c19} and a data impact model motivated
by established theoretical frameworks in machine learning \citep{t04}. Also, we
hope that the three-component decomposition of the data-sharing problem will
help practitioners leverage their situation-specific knowledge of their market
and ML models to build more accurate models for their specific needs.

\section*{Acknowledgments}

This research was partially funded by the Ministry of Education and Science of Bulgaria (support for
INSAIT, part of the Bulgarian National Roadmap for Research Infrastructure). The authors would
like to thank Florian Dorner, Mark Vero, Nikola Jovanovic and Sam Motamed for providing helpful
feedback on earlier versions of this work.

\bibliography{bibliography}

\begin{thebibliography}{45}
\providecommand{\natexlab}[1]{#1}
\providecommand{\url}[1]{\texttt{#1}}
\expandafter\ifx\csname urlstyle\endcsname\relax
  \providecommand{\doi}[1]{doi: #1}\else
  \providecommand{\doi}{doi: \begingroup \urlstyle{rm}\Url}\fi

\bibitem[Amir et~al.(2017)Amir, Erickson, and Jin]{a17m}
Amir, R., Erickson, P., and Jin, J.
\newblock On the microeconomic foundations of linear demand for differentiated
  products.
\newblock \emph{Journal of Economic Theory}, 169:\penalty0 641--665, 2017.

\bibitem[Bergemann \& Bonatti(2019)Bergemann and Bonatti]{be19}
Bergemann, D. and Bonatti, A.
\newblock Markets for information: An introduction.
\newblock \emph{Annual Review of Economics}, 11:\penalty0 85--107, 2019.

\bibitem[Berry et~al.(2019)Berry, Gaynor, and Scott~Morton]{b19}
Berry, S., Gaynor, M., and Scott~Morton, F.
\newblock Do increasing markups matter? {L}essons from empirical industrial
  organization.
\newblock \emph{Journal of Economic Perspectives}, 33\penalty0 (3):\penalty0
  44--68, 2019.

\bibitem[Bertrand(1883)]{b83}
Bertrand, J.
\newblock Book review of theorie mathematique de la richesse social and of
  recherches sur les principes mathematiques de la theorie des richesses.
\newblock \emph{Journal des savants}, 1883.

\bibitem[Binmore et~al.(1986)Binmore, Rubinstein, and Wolinsky]{b86}
Binmore, K., Rubinstein, A., and Wolinsky, A.
\newblock The nash bargaining solution in economic modelling.
\newblock \emph{RAND Journal of Economics}, 17\penalty0 (2):\penalty0 176--188,
  1986.

\bibitem[Blanchard et~al.(2017)Blanchard, El~Mhamdi, Guerraoui, and
  Stainer]{blanchard2017machine}
Blanchard, P., El~Mhamdi, E.~M., Guerraoui, R., and Stainer, J.
\newblock Machine learning with adversaries: Byzantine tolerant gradient
  descent.
\newblock In \emph{Advances in Neural Information Processing Systems},
  volume~30, 2017.

\bibitem[Blum et~al.(2021)Blum, Haghtalab, Phillips, and Shao]{b21}
Blum, A., Haghtalab, N., Phillips, R.~L., and Shao, H.
\newblock One for one, or all for all: Equilibria and optimality of
  collaboration in federated learning.
\newblock In \emph{International Conference on Machine Learning}, pp.\
  1005--1014. PMLR, 2021.

\bibitem[Bubeck(2015)]{bubeck2015convex}
Bubeck, S.
\newblock Convex optimization: Algorithms and complexity.
\newblock \emph{Foundations and Trends{\textregistered} in Machine Learning},
  8\penalty0 (3-4):\penalty0 231--357, 2015.

\bibitem[Carlton \& Perloff(1990)Carlton and Perloff]{c90}
Carlton, D.~W. and Perloff, J.~M.
\newblock \emph{Modern industrial organization}.
\newblock Higher Education, 1990.

\bibitem[Chon{\'e} \& Linnemer(2019)Chon{\'e} and Linnemer]{c19}
Chon{\'e}, P. and Linnemer, L.
\newblock The quasilinear quadratic utility model: An overview.
\newblock \emph{CESifo Working Paper}, 2019.

\bibitem[Chui et~al.(2022)Chui, Hall, Mayhew, and Singla]{survey22}
Chui, M., Hall, B., Mayhew, H., and Singla, A.
\newblock The state of {AI} in 2022---and a half decade in review.
\newblock QuantumBlack By McKinsey, 2022.

\bibitem[Cournot(1838)]{c38}
Cournot, A.-A.
\newblock \emph{Recherches sur les principes math{\'e}matiques de la
  th{\'e}orie des richesses}.
\newblock chez L. Hachette, 1838.

\bibitem[Dixit(1979)]{d79}
Dixit, A.
\newblock A model of duopoly suggesting a theory of entry barriers.
\newblock \emph{Bell Journal of Economics}, 10\penalty0 (1):\penalty0 20--32,
  1979.

\bibitem[Donahue \& Kleinberg(2021)Donahue and Kleinberg]{d21}
Donahue, K. and Kleinberg, J.
\newblock Model-sharing games: Analyzing federated learning under voluntary
  participation.
\newblock In \emph{AAAI Conference on Artificial Intelligence}, volume 35(6),
  pp.\  5303--5311, 2021.

\bibitem[Durrant et~al.(2022)Durrant, Markovic, Matthews, May, Enright, and
  Leontidis]{d22}
Durrant, A., Markovic, M., Matthews, D., May, D., Enright, J., and Leontidis,
  G.
\newblock The role of cross-silo federated learning in facilitating data
  sharing in the agri-food sector.
\newblock \emph{Computers and Electronics in Agriculture}, 193:\penalty0
  106648, 2022.

\bibitem[FedAI()]{w19}
FedAI.
\newblock We{B}ank and {S}wiss {R}e signed {C}ooperation {M}o{U}.
\newblock URL
  \url{https://www.fedai.org/news/webank-and-swiss-re-signed-cooperation-mou/}.

\bibitem[Fried(1984)]{f84}
Fried, D.
\newblock Incentives for information production and disclosure in a duopolistic
  environment.
\newblock \emph{The Quarterly Journal of Economics}, 99\penalty0 (2):\penalty0
  367--381, 1984.

\bibitem[Gulrajani \& Lopez-Paz(2021)Gulrajani and
  Lopez-Paz]{gulrajani2020search}
Gulrajani, I. and Lopez-Paz, D.
\newblock In search of lost domain generalization.
\newblock In \emph{International Conference on Learning Representations}, 2021.

\bibitem[Jones(1983)]{j83}
Jones, D.
\newblock Statistical analysis of empirical models fitted by optimization.
\newblock \emph{Biometrika}, 70\penalty0 (1):\penalty0 67--88, 1983.

\bibitem[Kairouz et~al.(2021)Kairouz, McMahan, Avent, Bellet, Bennis, Bhagoji,
  Bonawitz, Charles, Cormode, and Cummings]{k21}
Kairouz, P., McMahan, H.~B., Avent, B., Bellet, A., Bennis, M., Bhagoji, A.~N.,
  Bonawitz, K., Charles, Z., Cormode, G., and Cummings, R.
\newblock Advances and open problems in federated learning.
\newblock \emph{Foundations and Trends{\textregistered} in Machine Learning},
  14\penalty0 (1--2):\penalty0 1--210, 2021.

\bibitem[Karimireddy et~al.(2022)Karimireddy, Guo, and Jordan]{k22}
Karimireddy, S.~P., Guo, W., and Jordan, M.~I.
\newblock Mechanisms that {I}ncentivize {D}ata {S}haring in {F}ederated
  {L}earning.
\newblock \emph{Workshop on Federated Learning at NeurIPS (FL-NeurIPS'22)},
  2022.

\bibitem[K{\'o}czy(2018)]{k18}
K{\'o}czy, L.~{\'A}.
\newblock Partition function form games.
\newblock \emph{Theory and Decision Library C}, 48:\penalty0 312, 2018.

\bibitem[Liu et~al.(2022)Liu, Zhang, Wang, and Yang]{liu22}
Liu, J., Zhang, G., Wang, K., and Yang, K.
\newblock Task-load-aware game-theoretic framework for wireless federated
  learning.
\newblock \emph{IEEE Communications Letters}, 27\penalty0 (1):\penalty0
  268--272, 2022.

\bibitem[Lyu et~al.(2020{\natexlab{a}})Lyu, Xu, Wang, and Yu]{lyu20b}
Lyu, L., Xu, X., Wang, Q., and Yu, H.
\newblock Collaborative fairness in federated learning.
\newblock In \emph{Federated Learning}, pp.\  189--204. Springer,
  2020{\natexlab{a}}.

\bibitem[Lyu et~al.(2020{\natexlab{b}})Lyu, Yu, Nandakumar, Li, Ma, Jin, Yu,
  and Ng]{lyu20}
Lyu, L., Yu, J., Nandakumar, K., Li, Y., Ma, X., Jin, J., Yu, H., and Ng, K.~S.
\newblock Towards fair and privacy-preserving federated deep models.
\newblock \emph{IEEE Transactions on Parallel and Distributed Systems},
  31\penalty0 (11):\penalty0 2524--2541, 2020{\natexlab{b}}.

\bibitem[Mas-Colell et~al.(1995)Mas-Colell, Whinston, and Green]{m95}
Mas-Colell, A., Whinston, M.~D., and Green, J.~R.
\newblock \emph{Microeconomic theory}.
\newblock Oxford University Press, 1995.

\bibitem[Nash(1951)]{n51}
Nash, J.
\newblock Non-cooperative games.
\newblock \emph{Annals of Mathematics}, 54\penalty0 (2), 1951.

\bibitem[Raith(1996)]{r96}
Raith, M.
\newblock A general model of information sharing in oligopoly.
\newblock \emph{Journal of economic theory}, 71\penalty0 (1):\penalty0
  260--288, 1996.

\bibitem[Reiss \& Wolak(2007)Reiss and Wolak]{r07}
Reiss, P.~C. and Wolak, F.~A.
\newblock Structural econometric modeling: Rationales and examples from
  industrial organization.
\newblock \emph{Handbook of econometrics}, 6:\penalty0 4277--4415, 2007.

\bibitem[Richardson et~al.(2020)Richardson, Filos-Ratsikas, and
  Faltings]{richardson20}
Richardson, A., Filos-Ratsikas, A., and Faltings, B.
\newblock Budget-bounded incentives for federated learning.
\newblock \emph{Federated Learning: Privacy and Incentive}, pp.\  176--188,
  2020.

\bibitem[Rieke et~al.(2020)Rieke, Hancox, Li, Milletari, Roth, Albarqouni,
  Bakas, Galtier, Landman, and Maier-Hein]{r20}
Rieke, N., Hancox, J., Li, W., Milletari, F., Roth, H.~R., Albarqouni, S.,
  Bakas, S., Galtier, M.~N., Landman, B.~A., and Maier-Hein, K.
\newblock The future of digital health with federated learning.
\newblock \emph{NPJ digital medicine}, 3\penalty0 (1):\penalty0 119, 2020.

\bibitem[Rust \& Zahorik(1993)Rust and Zahorik]{r93}
Rust, R.~T. and Zahorik, A.~J.
\newblock Customer satisfaction, customer retention, and market share.
\newblock \emph{Journal of retailing}, 69\penalty0 (2):\penalty0 193--215,
  1993.

\bibitem[Shalev-Shwartz \& Ben-David(2014)Shalev-Shwartz and
  Ben-David]{shalev2014understanding}
Shalev-Shwartz, S. and Ben-David, S.
\newblock \emph{Understanding {M}achine {L}earning: From {T}heory to
  {A}lgorithms}.
\newblock Cambridge University Press, 2014.

\bibitem[Shapiro(1989)]{s89}
Shapiro, C.
\newblock Theories of oligopoly behavior.
\newblock \emph{Handbook of industrial organization}, 1:\penalty0 329--414,
  1989.

\bibitem[Tirole(1988)]{t88}
Tirole, J.
\newblock \emph{The theory of industrial organization}.
\newblock MIT Press, 1988.

\bibitem[Tsybakov(2004)]{t04}
Tsybakov, A.~B.
\newblock Optimal aggregation of classifiers in statistical learning.
\newblock \emph{The Annals of Statistics}, 32\penalty0 (1):\penalty0 135--166,
  2004.

\bibitem[Tu et~al.(2022)Tu, Zhu, Luong, Niyato, Zhang, and Li]{tu22}
Tu, X., Zhu, K., Luong, N.~C., Niyato, D., Zhang, Y., and Li, J.
\newblock Incentive mechanisms for federated learning: From economic and game
  theoretic perspective.
\newblock \emph{IEEE transactions on cognitive communications and networking},
  8\penalty0 (3):\penalty0 1566--1593, 2022.

\bibitem[Vives(1984)]{v84}
Vives, X.
\newblock Duopoly information equilibrium: Cournot and {B}ertrand.
\newblock \emph{Journal of economic theory}, 34\penalty0 (1):\penalty0 71--94,
  1984.

\bibitem[Von~Neumann \& Morgenstern(1944)Von~Neumann and Morgenstern]{v44}
Von~Neumann, J. and Morgenstern, O.
\newblock Theory of games and economic behavior.
\newblock In \emph{Theory of games and economic behavior}. Princeton University
  Press, 1944.

\bibitem[Werner et~al.(2022)Werner, He, Karimireddy, Jordan, and Jaggi]{w22}
Werner, M., He, L., Karimireddy, S.~P., Jordan, M., and Jaggi, M.
\newblock Towards {P}rovably {P}ersonalized {F}ederated {L}earning via
  {T}hreshold-{C}lustering of {S}imilar {C}lients.
\newblock In \emph{Workshop on Federated Learning at NeurIPS (FL-NeurIPS'22)},
  2022.

\bibitem[Wu \& Yu(2022)Wu and Yu]{wu22}
Wu, X. and Yu, H.
\newblock Mar{S}-{FL}: Enabling {C}ompetitors to {C}ollaborate in {F}ederated
  {L}earning.
\newblock \emph{IEEE Transactions on Big Data}, \penalty0 (1):\penalty0 1--11,
  2022.

\bibitem[Yang et~al.(2020)Yang, Fan, and Yu]{y20}
Yang, Q., Fan, L., and Yu, H.
\newblock \emph{Federated Learning: Privacy and {I}ncentive}, volume 12500.
\newblock Springer Nature, 2020.

\bibitem[Yu et~al.(2020)Yu, Liu, Liu, Chen, Cong, Weng, Niyato, and Yang]{yu20}
Yu, H., Liu, Z., Liu, Y., Chen, T., Cong, M., Weng, X., Niyato, D., and Yang,
  Q.
\newblock A sustainable incentive scheme for federated learning.
\newblock \emph{IEEE Intelligent Systems}, 35\penalty0 (4):\penalty0 58--69,
  2020.

\bibitem[Zeng et~al.(2021)Zeng, Zeng, Wang, Li, and Chu]{z21}
Zeng, R., Zeng, C., Wang, X., Li, B., and Chu, X.
\newblock A comprehensive survey of incentive mechanism for federated learning.
\newblock \emph{arXiv preprint arXiv:2106.15406}, 2021.

\bibitem[Zhan et~al.(2021)Zhan, Zhang, Hong, Wu, Li, and Guo]{zhan21}
Zhan, Y., Zhang, J., Hong, Z., Wu, L., Li, P., and Guo, S.
\newblock A survey of incentive mechanism design for federated learning.
\newblock \emph{IEEE Transactions on Emerging Topics in Computing}, 10\penalty0
  (2):\penalty0 1035--1044, 2021.

\end{thebibliography}
\bibliographystyle{icml2023}

\clearpage

\appendix

\begin{center}
  {\LARGE Supplementary Material}
\end{center}

The supplementary material is structured as follows.
\begin{itemize}
  \item \cref{sec:proofs} contains the proofs of all results in the main text.
  \item \cref{sec:example} contains further examples of the data-sharing
    problem and costs.
  \item \cref{s:robustness} contains several extensions of the setting in
    \cref{sec:data_sharing}.
  \item \cref{sec:other-coals} contains alternative models of coalition
    formation between many firms.
  \item \cref{sec:welfare} contains the welfare analysis of the data-sharing
    problem between two firms.
  \item \cref{sec:add-coop} presents additional simulations for
    \cref{sec:many_firms}.
\end{itemize}

\section{Proofs}
\label{sec:proofs}

\subsection{Proof of \texorpdfstring{\cref{l:demand}}{Lemma \ref{l:demand}}}
\label{sec:demand-proof}

First, we will solve the same consumption problem without positivity
constraints
\[
  \max_{g, q_1, \dots, q_m} u(g, q_1, \dots, q_m) \text{ s.t. } g + \sum_{i =
  1}^m p_i q_i \le B.
\]
Using the KKT conditions for $q_i$ we get
\[
  0 = \pdv{L}{q_i} = \pdv{u}{q_i} - \lambda p_i,
\]
where $\lambda$ is the Lagrange multiplier for the budget constraint. The
first-order condition for $g$ gives
\[
  0 = 1 - \lambda \implies \lambda = 1.
\]
Therefore, the first-order conditions for $q_1, \dots, q_m$ look like
\[
  0 = 1 - q_i - \sum_{j \neq i} q_j - p_i \implies p_i = 1 - q_i - \gamma
  \sum_{j \neq i} q_j.
\]
Notice that
\[
  \sum_i p_i = m - \sum_i q_i - \gamma (m - 1) \sum_i q_i \implies \sum_i
  q_i = \frac{m - \sum_i p_i}{1 + \gamma (m - 1)}.
\]
Therefore,
\[
  q_i = \frac{(1 + \gamma (m - 1)) (1 - p_i) - \gamma (m - \sum_i p_i)}{(1 -
  \gamma) (1 + \gamma (m - 1))} =\\
  \frac{1 - \gamma - (1 + \gamma (m - 2)) p_i -
  \gamma \sum_{j \neq i} p_j}{(1 - \gamma) (1 + \gamma (m - 1))}.
\]
This solution is a global maximizer because $u(g, q_1, \dots, q_m)$ is
strongly concave in $q_1, \dots, q_m$ and linear in $g$.

Since the auxiliary problem has less constraints than the original problem, a
solution to the auxiliary problem will be a solution to the original problem as
long as all quantities are non-negative. These constraints give the following
restrictions on $p_1, \dots, p_m$ and $B$
\[
  \forall i \ (1 - \gamma) (1 - p_i) \ge \gamma \sum_{j \neq i} (p_j - p_i),
\]
\[
  B \ge \sum_{i = 1}^m q_i^* p_i.
\]
The first series of inequalities hold when prices are less than one and close
enough. In our setting, it is achieved when $1 > a$ and $(1 - a) \gg b$, where
$a$ and $b$ are parameters from \cref{e:data-dep} (since Equations
(\ref{eq:cournot-restriction}) and (\ref{eq:bertrand-restriction}) hold). The
last equation holds when $B$ is big enough, for example, when $B > m$.

\subsection{Proof of \texorpdfstring{\cref{l:cournot}}{Lemma \ref{l:cournot}}}
\label{sec:cournot-proof}

The profit maximization tasks of the firms are
\[
  \max_{q_i} \E((p_i - c_i) q_i) = \E(1 - \gamma \sum_{j \neq i} q_j - c_i) q_i
  - q_i^2.
\]
Using the first-order conditions, we get
\[
  q_i^*(q_{-i}) = \frac{\E(1 - c_i - \gamma \sum_{j \neq i} q_j)}{2}.
\]
Both equilibrium quantities should be the best responses to the opponent's
expected amounts
\[
  q_i^* = \frac{\E(1 - c_i - \gamma \sum_{j \neq i} q_j^*)}{2}.
\]
Therefore,
\[
  \sum_i q_i^* = \frac{\E(m - \sum_i c_i - (m - 1) \gamma \sum_i q_i^*)}{2},
\]
Notice that
\[
  \E (q_i^*) = \frac{\E \E (\dots)}{2} = \frac{\E (\dots)}{2} = q_i^*.
\]
So,
\[
  \sum_i q_i^* = \frac{m - \sum_i c_i^e}{2 + \gamma (m - 1)}.
\]
The best response equation gives
\[
  q_i^* = \frac{1 - c_i^e - \gamma \sum_j q_j^* + \gamma q_i^*}{2} \implies
  q_i^* = \frac{2 - \gamma - (2 + \gamma (m - 2)) c_i^e + \gamma \sum_{j \neq
  i} c_j^e}{(2 - \gamma) (2 + (m - 1) \gamma)}.
\]

We get some expression for the quantities produced by the firms. However, we
need to ensure that these quantities are positive. This property is equivalent
to the following inequalities
\begin{equation}
  \label{eq:cournot-restriction}
  \forall i \ (2 - \gamma) (1 - c_i^e) \ge \gamma \sum_{j \neq i} (c_i^e -
  c_j^e).
\end{equation}
These inequalities hold, for example, when $1 > a$ and $(1 - a) \gg b$, where
$a$ and $b$ are parameters from \cref{e:data-dep}.

Now, notice that the expected profit function is a second-degree polynomial in
$q_i$
\[
  \varPi^e_i = -q_i^2 + \beta q_i.
\]
Since the maximum value is achieved at $q_i^*(q_{-i})$, this polynomial looks
like
\[
  \varPi^e_i = -(q_i - q_i^*(q_{-i}))^2 + q_i^*(q_{-i})^2.
\]
And it gives the following equilibrium expected profits
\[
  \varPi_i^e = (q_i^*)^2.
\]

\subsection{Proof of \texorpdfstring{\cref{l:bertrand}}{Lemma
\ref{l:bertrand}}}
\label{sec:bertrand-proof}

The expected profit maximization tasks of the firms are
\[
  \max_{p_i} \E \Par*{(p_i - c_i) \Par*{\frac{1 - \gamma - (1 + \gamma (m - 2))
  p_i + \gamma \sum_{j \neq i} p_j}{(1 - \gamma) (1 + \gamma (m - 1))}}}.
\]
Thus, the best responses are
\[
  p_i^*(p_{-i}) = \frac{\E(1 - \gamma + (1 + \gamma (m - 2)) c_i + \gamma
  \sum_{j \neq i} p_j)}{2 (1 + \gamma (m - 2))}.
\]
Notice that
\[
  \E (p_i^*) = \frac{\E \E (\dots)}{2 (1 + \gamma (m - 2))} = \frac{\E
  (\dots)}{2 (1 + \gamma (m - 2))} = p_i^*.
\]
Thus,
\[
  p_i^* = \frac{1 - \gamma + (1 + \gamma (m - 2)) c^e_i + \gamma \sum_{j \neq
  i} p^*_j}{2 (1 + \gamma (m - 2))}.
\]
If we sum these equations, we get
\begin{align*}
  \sum_i p_i^* &= \frac{(1 - \gamma) m + (1 + \gamma (m - 2)) \sum_i c^e_i +
  \gamma (m - 1) \sum_i p^*_i}{2 (1 + \gamma (m - 2))} \implies\\
  \sum_i p_i^* &= \frac{(1 - \gamma) m + (1 + \gamma (m - 2)) \sum_i c_i^e}{2 +
  \gamma (m - 3)}.
\end{align*}
After substitution of this sum into the best response equation, we have
\begin{align*}
  p_i^* &= \frac{d_1 + d_2 c_i^e +  d_3 \sum_{j \neq i} c_j^e}{d_4},\\
  d_1 &= 2 + \gamma (2 m - 5) - \gamma^2 (2 m - 3),\\
  d_2 &= 2 + 3 \gamma (m - 2) + \gamma^2 (m^2 - 4 m + 4),\\
  d_3 &= \gamma + \gamma^2 (m - 2),\\
  d_4 &= 4 + 6 \gamma (m - 2) + \gamma^2 (2 m^2 - 9 m + 9). 
\end{align*}

Here, we again need to ensure that quantities produced by the firms are
positive. This property is equivalent the following inequalities
\begin{equation}
  \label{eq:bertrand-restriction}
  \forall p_i - c_i^e \ge 0 \iff d_1 (1 - c_i^e) \ge d_3 \sum_{j \neq i} (c_i^e
  - c_j^e).
\end{equation}
These inequalities hold, for example, when $1 > a$ and $(1 - a) \gg b$, where
$a$ and $b$ are parameters from \cref{e:data-dep}.

Notice that the expected profit of the firm is a second degree polynomial
\[
  \varPi^e_i = -\alpha p_i^2 + 2 \beta p_i - \gamma.
\]
Since best response $p_i^*(p_{-i})$ is a maximizer of this polynomial, we get
\[
  \varPi^e_i = -\alpha (p_i - p_i^*(p_{-i}))^2 + \delta.
\]
Also, notice that
\[
  \varPi^e_i(c_i^e, p_{-i}) = \E((c_i^e - c_i) q_i(c_i^e, p_{-i})) = 0.
\]
Therefore,
\[
  \varPi^e_i = \alpha ((c_i^e - p_i^*(p_{-i}))^2 - (p_i - p_i^*(p_{-i}))^2).
\]
Direct calculations show that
\[
  \alpha = \frac{1 + \gamma (m + 2)}{(1 - \gamma) (1 + \gamma (m - 1))}.
\]
These equations imply the following equilibrium profits
\[
  \varPi^e_i(p_i^*, p_{-i}^*) = \frac{(1 + \gamma (m + 2)) (p_i^* - c_i^e)^2}{(1 -
  \gamma) (1 + \gamma (m - 1))},
\]

\subsection{Derivation of \texorpdfstring{\cref{e:mle}}{Equation
(\ref{e:mle})}}
\label{sec:asymptotic_normality}

Formally, we assume that the cost of production of one unit of good $C$ is a
random variable that depends on random noise $\X \in \mathbb{R}^m$ and the
decisions of a firm $\s \in \mathbb{R}^n$. The firm knows that the noise is
distributed according to a density function
\[
  \X \sim f(\cdot, \prm), \prm \in \mathbb{R}^d.
\]
However, it does not know the parameters of the distribution $\prm$.

The firm wants to minimize its marginal cost on average
\[
  c(\s) = \E_{\X \sim f(\cdot, \prm)}(C(\s, \X)).
\]
To achieve it, the firm decides to use the maximum likelihood estimates
$\est{\prm}$ of $\prm$ using its data about the realizations of the noise.
After that, the firm chooses an action that minimizes an estimated cost on
average
\[
  \s_\text{fin} = \argmin_{\s} \E_{\X \sim f(\cdot, \est{\prm})}(C(\s, \X)).
\]
The resulting cost $c(\s_\text{fin})$ is a random variable because of
randomness of the firm's sample $S$. However, the asymptotic normality of MLE
allows us to reason about a cost distribution.

We use the delta method to achieve this goal. By the asymptotic normality,
\[
  \sqrt{n} (\est{\prm} - \prm) \dto \N(0, \mat{I}(\prm)^{-1}).
\]
Further, by the delta method,
\[
  \sqrt{n} (\s_\text{fin} - \s^*) \dto \sqrt{n} \pdv{\s}{\prm}^\tran
  (\est{\prm} - \prm).
\]
Finally, by the second-order delta method,
\begin{align*}
  n (c(\s_\text{fin}) - c(\s^*)) &\dto n \nabla c(\s^*)^\tran
  (\s_\text{fin} - \s^*) + n (\s_\text{fin} - \s^*)^\tran
  \frac{\nabla^2 c(\s^*)}{2}
  (\s_\text{fin} - \s^*) \\
  &= n (\s_\text{fin} - \s^*)^\tran \frac{\nabla^2 c(\s^*)}{2}
  (\s_\text{fin} - \s^*).
\end{align*}
Thus, the marginal cost will be distributed approximately as generalized
chi-squared distribution. The expected marginal cost will be approximately
\[
  c^e = \E_{S}(c(\s_\text{fin})) \approx c(\s^*) + \frac{1}{2 n}
  \Tr \Par*{\nabla^2 c(\s^*) \pdv{\s}{\prm}^\tran
  \mat{I}(\prm)^{-1} \pdv{\s}{\prm}}.
\]
Therefore, one of the natural choices for the dependency of expected marginal
cost on the number of data points is
\[
  c^e(n) = a + \frac{b}{n}.
\]

\subsection{Proof of \texorpdfstring{\cref{t:main}}{Theorem \ref{t:main}}}
\label{sec:main_proof}

In the case $\gamma \le 0$, the collaboration is always profitable because the
right-hand side of the criterion is less than zero, and the left-hand side of
the criterion is not less than zero.

In the case $\gamma > 0$, the criteria for the firm $i$ are the following. In
the case of Cournot competition, we have
\begin{multline*}
  (2 - \gamma) (n_i^{-\beta} - (n_1 + n_2)^{-\beta}) > \gamma
  (n_{-i}^{-\beta} - n_i^{-\beta}) \\
  \iff f_\text{Cournot}(x, \gamma, \beta) \defeq 2 x^\beta - \gamma (1 -
  x)^\beta - (2 - \gamma) x^\beta (1 - x)^\beta > 0,
\end{multline*}
where $x = n_{-i} / (n_1 + n_2)$. The right-hand side of the criterion $f$ is
increasing in $x$, $f_\text{Cournot}(0, \gamma, \beta) < 0$, and
$f_\text{Cournot}(1, \gamma, \beta) > 0$. Thus, there exists a break-even
point $x_\text{Cournot}(\gamma, \beta)$ such that
\[
  f_\text{Cournot}(x, \gamma, \beta)
  \begin{cases}
    < 0, & x < x_\text{Cournot}(\gamma, \beta),\\
    = 0, & x = x_\text{Cournot}(\gamma, \beta),\\
    > 0, & x > x_\text{Cournot}(\gamma, \beta).\\
  \end{cases}
\]
Similarly, a break-even point exists in the Bertrand case.

Now, we prove the properties of $x$.

The first property follows from the fact that $f$ is decreasing in $\gamma$.
The second property follows from the fact that $f_\text{Cournot} >
f_\text{Bertrand}$. The proof of the third property is more complicated.

In the case of Cournot, the proof is the following. Let $x(\beta) =
x_\text{Cournot}(\gamma, \beta)$. We want to show that $x(\beta)$ is
increasing in $\beta$. To achieve it, we will show that
$f_\text{Cournot}(x(\beta), \gamma, \beta + \epsilon) < 0$. From the
definition,
\[
  x(\beta) = \Par*{\frac{\gamma (1 - x(\beta))^\beta}{2 - (2 - \gamma) (1 -
  x(\beta))^\beta}}^{\frac{1}{\beta}}.
\]
So,
\begin{multline*}
  f_\text{Cournot}(x(\beta), \gamma, \beta + \epsilon) < 0 \\
  \iff \Par*{\frac{\gamma (1 - x(\beta))^\beta}{2 - (2 - \gamma) (1 -
  x(\beta))^\beta}}^{\frac{1}{\beta}} < \Par*{\frac{\gamma (1 -
  x(\beta))^{\beta + \epsilon}}{2 - (2 - \gamma) (1 - x(\beta))^{\beta +
  \epsilon}}}^{\frac{1}{\beta + \epsilon}} \\
  \iff \frac{\gamma}{2} \Par*{\frac{2 - (2 - \gamma) (1 - x(\beta))^{\beta +
  \epsilon}}{\gamma}}^{\frac{\beta}{\beta + \epsilon}} + \frac{2 - \gamma}{2}
  (1 - x)^\beta < 1.
\end{multline*}
The last inequality follows from the concavity of the function
$x^{\frac{\beta}{\beta + \epsilon}}$. The proof is similar in the Bertrand
case.

\subsection{Proof of \texorpdfstring{\cref{t:bargaining}}{Theorem
\ref{t:bargaining}}}
\label{sec:bargaining-proof}

Direct computations show
\begin{align*}
  &\varPi^e_i(\lambda_1, \lambda_2) - \varPi^e_i(0, 0) =\\
  &(2 c^e_i(n_i) - \gamma c^e_{-i}(n_{-i}) - 2 c^e_i(n_i + n_{-i}
  \lambda_{-i}) + \gamma c^e_{-i}(n_{-i} + n_1 \lambda_i)) \times\\
  &\frac{(4 - 2 \gamma - 2
  c^e_i(n_i) + \gamma c^e_{-i}(n_{-i}) - 2 c^e_i(n_i + n_{-i} \lambda_{-i}) +
  \gamma c^e_{-i}(n_{-i} + n_i \lambda_i))}{(4 - \gamma^2)^2}.
\end{align*}
Denote
\[
  u_i = \frac{1}{n_i^\beta} - \frac{1}{(n_i + \lambda_{-i} n_{-i})^\beta}.
\]
It gives
\[
  c_i^e(n_i) - c_i^e(n_i + \lambda_{-i} n_{-i}) = a + \frac{b}{n_i^\beta} - a
  - \frac{b}{(n_i + n_{-i} \lambda_{-i})^\beta} = b u_i.
\]
So, the profit gain will be equal to
\[
  \varPi^e_i(\lambda_1, \lambda_2) - \varPi^e_i(0, 0) =
  (2 u_i - \gamma u_{-i}) \Par*{1 - \frac{4 \xi}{n_i^\beta} + \frac{2 \gamma
  \xi}{n_{-i}^\beta} + \xi (2 u_i - \gamma u_{-i})} \frac{b (4 - 2 \gamma) (1
  - a)}{(4 - \gamma^2)^2},
\]
where $\xi = \frac{b}{(4 - 2 \gamma) (1 - a)}$. Denote
\begin{align*}
  h_i &= \frac{1}{n_i^\beta} - \frac{1}{(n_1 + n_2)^\beta},\\
  g_i &= \frac{4}{n_i^\beta} - \frac{2 \gamma}{n_{-i}^\beta}.
\end{align*}
Then \cref{e:bargaining} will simplify to
\[
  \max_{u_i \in [0, h_i]} \! (2 u_1 - \gamma u_2) (2 u_2 - \gamma u_1) (1 - \xi
  g_1 + \xi (2 u_1 - \gamma u_2)) (1 - \xi g_2 + \xi (2 u_2 - \gamma u_1))
  \, \text{s.t.} \forall i \, 2 u_i - \gamma u_{-i} \! \ge \! 0.
\]
Now, we will change variables
\[
  v_i \defeq 2 u_i - \gamma u_{-i} \implies u_i = \frac{2 v_i + \gamma
  v_{-i}}{4 - \gamma^2},
\]
which will give the following problem
\[
  \max_{v_i \ge 0} v_1 v_2 (1 - \xi g_1 + \xi v_1) (1 - \xi g_2 + \xi v_2)
  \text{ s.t. } \forall i \ 2 v_i + \gamma v_{-i} \le (4 - \gamma^2) h_i.
\]

Notice that the restrictions $v_i \ge 0$ are not binding since the point
$(v_1, v_2) = (\epsilon, \epsilon)$ delivers positive Nash product and
satisfies the restrictions. However, one of the restrictions $2 v_i + \gamma
v_{-i} \le (4 - \gamma^2) h_i$ should be binding because any internal point
$(v_1, v_2)$ could be transformed to the point $((1 + \epsilon) v_1, (1 +
\epsilon) v_2)$, which would increase the objective function and satisfy the
restrictions.

First, consider the case $2 v_1^* + \gamma v_2^* < (4 - \gamma^2) h_1$ at
an optimum $(v_1^*, v_2^*)$. According to the previous paragraph, we have $2
v_2^* + \gamma v_1^* = (4 - \gamma^2) h_2$, which gives $v_2^* - v_1^* > (2 +
\gamma) (h_2 - h_1) \ge 0$. The derivative of the objective along the
direction $(2, -\gamma)$ equals to
\[
  (2 v_2^* - \gamma v_1^*) (1 - \xi g_1 + \xi v_1^*) (1 - \xi g_2 + \xi
  v_2^*) + \xi v_1^* v_2^* (2 - \gamma - \xi (2 g_2 - \gamma g_1 - 2 v_2^* +
  \gamma v_1^*)).
\]
Since $2 v_2^* - \gamma v_1^* > 0$, $\abs{2 g_2 - \gamma g_1 - 2 v_2^* +
\gamma v_1^*} \le 10$, and $\xi \le \frac{b}{2 (1 - a)} \le \frac{1}{10}$,
the derivative is positive. Thus, the point $(v_1^*, v_2^*)$ can not be
optimal because a small shift in the direction $(2, -\gamma)$ would increase
the objective and would not violate the restrictions. Therefore, in the
optimum, we will have $2 v_1^* = (4 - \gamma^2) h_1 - \gamma v_2^*$.

If we substitute the last expression into the objective function, the
objective will become a fourth-degree polynomial of $v_2$. If the stationary
point of the objective function $v_2^s$ satisfy the restriction $2 v_2^s +
\gamma v_1(v_2^s) \le (4 - \gamma^2) h_2$, it will be a solution to the
original problem. Otherwise, the solution will come form the equation $2
v_2^0 + \gamma v_1(v_2^0) = (4 - \gamma^2) h_2$ since all other restrictions
are not binding. So, the solution will be equal to $v_2^* = \min(v_2^s,
v_2^0)$ (the restriction $2 v_2 + \gamma v_1(v_2) \le (4 - \gamma^2) h_2$
does not hold only when $v_2$ is big).

Consider the following relaxed problem
\[
  \max_{v_i \ge 0} v_1 v_2 (1 - \xi g_1 + \xi v_1) (1 - \xi g_2 + \xi v_2)
  \text{ s.t. } 2 v_1 + \gamma v_2 = (4 - \gamma^2) h_1.
\]
The stationary point will be the root of the derivative and hence will satisfy
the following equation
\begin{multline*}
  (\gamma v_2^s - 2 v_1(v_2^s)) (1 - \xi g_1 + \xi v_1(v_2^s)) (1 - \xi g_2
  + \xi v_2^s) =\\
  \xi v_1(v_2^s) v_2^s (2 (1 - \xi g_1 + \xi v_1(v_2^s)) - \gamma (1 - \xi g_2
  + \xi v_2^s))\\
  \implies \gamma v_2^s - 2 v_1(v_2^s) = \xi \frac{v_1(v_2^s) v_2^s (2 (1 -
  \xi g_1 + \xi v_1(v_2^s)) - \gamma (1 - \xi g_2 + \xi v_2^s))}{(1 - \xi g_1
  + \xi v_1(v_2^s)) (1 - \xi g_2 + \xi v_2^s)}.
\end{multline*}
Since $-2 \le g_i \le 4$, $0 \le v_i \le 2$, and $v_1(v_2) v_2 \le \frac{(4 -
\gamma^2)^2 h_1^2}{8 \gamma}$, we get
\[
  \abs{\gamma v_2^s - 2 v_1(v_2^s)} \le \xi \frac{(4 - \gamma^2) h_1^2}{8
  \gamma} \frac{2 - \gamma + 12 \xi}{(1 - 4 \xi)^2} = \O(\xi).
\]
So, the stationary point is approximately $v_2^s = \frac{(4 - \gamma^2)
h_1}{2 \gamma} + \O(\xi)$.

Therefore, the solution to the original problem is approximately
\[
  v_2^* = \min\Par*{\frac{(4 - \gamma^2) h_1}{2 \gamma} + \O(\xi), 2 h_2 -
  \gamma h_1} = \min\Par*{\frac{(4 - \gamma^2) h_1}{2 \gamma}, 2 h_2 - \gamma
  h_1} + \O(\xi).
\]
Substituting everything back, we get the solution presented in the theorem
statement.

Now, we will prove that $\tilde{\lambda}_1$ is non-increasing in $\gamma$.

First, the criterion function for choosing non-unit solution
\[
  \frac{(2 - \gamma)^2}{(n_1 + n_2)^\beta} + \frac{4 \gamma}{n_2^\beta} -
  \frac{4 + \gamma^2}{n_1^\beta} = 4 \gamma h_2 - (4 + \gamma^2) h_1.
\]
is increasing in $\gamma$ since its derivative $4 h_2 - 2 \gamma h_1 \ge 0$
is positive. So, when $\gamma$ increases non-unit solution becomes more
probable.

Second, the non-unit solution is decreasing in $\gamma$ since the function
$\frac{4 + \gamma^2}{\gamma} = \frac{4}{\gamma} + \gamma$ is decreasing in
$\gamma$ on $(0, 1)$. (Note that$\tilde{\lambda}_1 = 1$, when $\gamma \le
0$.)

Now, we will prove that $\tilde{\lambda}_1$ is non-increasing in $\beta$.

First, notice that the non-unit criterion is equivalent to
\[
  f(x, \beta) = 4 \gamma (1 - x)^\beta (1 - x^\beta) - (4 + \gamma^2) x^\beta
  (1 - (1 - x)^\beta) \ge 0,
\]
where $x = \frac{n_2}{n_1 + n_2}$. The criterion function $f$ is decreasing
in $x$ and have one root $x(\beta)$ on the interval $[0, 1]$. We will prove
that the root is shifting to the right, making more size pairs produce
non-unit solution. Notice the criterion can be also expressed as
\[
  x \le \Par*{\frac{4 \gamma (1 - x)^\beta}{4 + \gamma^2 - (2 - \gamma)^2 (1
  - x)^\beta}}^{\frac{1}{\beta}}.
\]
So, to prove that $x(\beta) \le x(\beta + \epsilon)$ it is sufficient to
show that
\begin{multline*}
  \Par*{\frac{4 \gamma (1 - x(\beta))^\beta}{4 + \gamma^2 - (2 - \gamma)^2 (1
  - x(\beta))^\beta}}^{\frac{1}{\beta}} = x(\beta) < \Par*{\frac{4 \gamma (1
  - x(\beta))^{\beta + \epsilon}}{4 + \gamma^2 - (2 - \gamma)^2 (1 -
  x(\beta))^{\beta + \epsilon}}}^{\frac{1}{\beta + \epsilon}}\\
  \iff \frac{(2 - \gamma)^2}{4 + \gamma^2} (1 - x)^\beta + \frac{4 \gamma}{4
  + \gamma^2} \Par*{\frac{4 + \gamma^2 - (2 - \gamma)^2 (1 - x)^{\beta +
  \epsilon}}{4 \gamma}}^{\frac{\beta}{\beta + \epsilon}} \le 1.
\end{multline*}
The last inequality follows from the concavity of the function
$x^{\frac{\beta}{\beta + \epsilon}}$.

Now, we will prove that the non-unit solution is decreasing in $\beta$ given
that non-unit criterion holds. Notice that the non-unit solution
$\tilde{\lambda}_1(\beta)$ satisfies
\[
  \frac{1}{n_2^\beta} - \frac{1}{(n_2 + \tilde{\lambda}_1(\beta) n_1)^\beta}
  = \frac{4 + \gamma^2}{4 \gamma} \Par*{\frac{1}{n_1^\beta} - \frac{1}{(n_1 +
  n_2)^\beta}}.
\]
So, to show that $\tilde{\lambda}_1(\beta) \ge \tilde{\lambda}_1(\beta +
\epsilon)$, it is sufficient to show that
\[
  (x^{-\beta} - \delta (1 - x)^{-\beta} + \delta)^{\frac{1}{\beta}} \le
  (x^{-\beta - \epsilon} - \delta (1 - x)^{-\beta - \epsilon} +
  \delta)^{\frac{1}{\beta + \epsilon}},
\]
where $x = \frac{n_2}{n_1 + n_2}$ and $\delta = \frac{4 + \gamma^2}{4
\gamma}$. This inequality is equivalent to
\[
  ((1 - x)^\beta x^{-\beta} - \delta + \delta (1 - x)^\beta)^{\frac{\beta +
  \epsilon}{\beta}} \le (1 - x)^{\beta + \epsilon} x^{-\beta - \epsilon} -
  \delta + \delta (1 - x)^{\beta + \epsilon}.
\]
Denote $u_0 = \Par*{\frac{1 - x}{x}}^\beta$, $v_0 = (1 - x)^\beta$, and
$\alpha = \frac{\beta + \epsilon}{\beta}$. We get the following inequality
\[
  (u_0 - \delta + \delta v_0)^\alpha \le u_0^\alpha - \delta + \delta
  v_0^\alpha.
\]
Denote
\[
  f(u, v) = u^\alpha - \delta + \delta v^\alpha - (u - \delta + \delta
  v)^\alpha.
\]
Notice that $f(u, 1) = 0$. Additionally notice that
\[
  \pdv{f}{v} = \delta \alpha v^{\alpha - 1} - \delta \alpha (u - \delta +
  \delta v)^{\alpha - 1} \le 0 \iff v \le u - \delta + \delta v.
\]
If we prove that this derivative is negative for all points $(u_0, x)$, where
$x \in (v_0, 1)$, we will prove the desired result. Notice that the last
inequality is the most stringent for the point $(u_0, v_0)$. At this point,
we have
\[
  v_0 \le u_0 - \delta + \delta v_0 \iff 0 \le \frac{4 \gamma}{n_2^\beta} -
  \frac{4 + \gamma^2}{n_1^\beta} + \frac{(2 - \gamma)^2}{(n_1 + n_2)^\beta}.
\]
The last inequality holds because we consider the non-unit solution.

The last property of the theorem follows from the fact that, for small ratio
$\frac{n_2}{n_1}$, the non-unit criterion holds and the Taylor formula for
this ratio results in the expression presented in the statement.

\section{Examples of data-sharing problem}
\label{sec:example}

\subsection{Taxi rides}

In this section, how our framework works using motivating example from
\cref{sec:setting}. In this example, two taxi firms use data to optimize the
driver scheduling algorithm. This optimization allow them to use less driver
time; thus, making their expected marginal costs lower.

We assume that the only uncertainty arising in scheduling algorithm is the
duration of one kilometer ride $X \sim \N(\mu, 1)$. The firms do not know the
parameter $\mu$. However, each firm $F_i$ has $n_i$ observations of this random
variable $S_i = \{X_i^j\}_{j=1}^{n_i}$.

Their scheduling algorithm require the estimate $s$ of ride duration $X$. If
the estimate is too low $s < X$, the company will need to attract more drivers
and pay them more. If the estimate is too big $s > X$, the drivers will be
underutilized. We assume that costs associated with these losses have MSE form
$C(s, X) = a - b + b (s - X)^2$. This functional form will imply the following
costs on average
\[
  c(s) = \E_X(C(s, X)) = a - b + b \E((s - X)^2) = a + b (s - \mu)^2.
\]

If the firms do not collaborate, $F_i$ uses an average of its own points
\[
  s_{i, \text{ind}} = \frac{1}{n_i} \sum_{j = 1}^{n_i} X^i_j \implies s_{i,
  \text{ind}} \sim \N \Par*{\mu, \frac{1}{n_i}}
\]
to estimate $\mu$. This will give the following expected marginal costs
\[
  c_{i, \text{ind}}^e = \E_{S_1, S_2} c(s_{i, \text{ind}}) = a + b \E (s_{i,
  \text{ind}} - \mu)^2 = a + \frac{b}{n_i}.
\]
If the firms collaborate, they uses both samples to calculate average
\[
  s_\text{shared} = \frac{1}{n_1 + n_2} \sum_{i, j} X^i_j \implies s_\text{shared}
  \sim \N \Par*{\mu, \frac{1}{n_1 + n_2}}
\]
and the expected marginal costs will be
\[
  c_\text{shared}^e = \E_{S_1, S_2} c(s_\text{shared}) = a + \frac{b}{n_1 + n_2}.
\]

To solve the data-sharing problem, we need to describe the competition between
firms. For simplicity we will only consider the case of Cournot competition.

As in \cref{sec:supply}, the firms are interested in increasing their
expected profits
\[
  \varPi_i^e = \E_{S_1, S_2} (p_i q_i - c_i q_i).
\]
Thus, the firms use the same quantities as in \cref{sec:supply}. In the case
of non-collaboration, the expected profits are
\[
  \varPi_{i, \text{ind}}^e = \Par*{\frac{2 - \gamma + \gamma c_{-i}^e - 2
  c_i^e}{4 - \gamma^2}}^2.
\]
Similarly, in the case of collaboration,
\[
  \varPi^e_\text{shared} = \Par*{\frac{(2 - \gamma) (1 -
  c^e_{\text{shared}})}{4 - \gamma^2}}^2.
\]
The collaboration criterion is the same as in \cref{sec:supply}.

\subsection{Oil drilling}

Assume that an oil company manager needs to optimize the costs of oil rig
construction. Construction of an oil rig happens in two steps. First,
geologists look at a new place for a rig and produce a noisy signal of oil
presence $X$. Second, the company might try to build a rig. However, if there
is no oil, the company will lose money. Thus, sometimes it is more profitable
to repeat the geological expedition.

The noisy signal of geologists predicts the presence of oil in the following
manner
\[
  \Pr \Par*{\text{success} \given X} = \Pr \Par*{X + \epsilon \ge 0 \given X},
  \ \epsilon \sim \N(0, 1).
\]
Also, the manager knows that
\[
  X \sim \N(0, \sigma^2).
\]
However, the manager does not know $\sigma^2$ and needs to estimate it from the
previous observations. The new expedition costs $a$, and drilling a new rig
costs $b$.

The natural strategy is to repeat expeditions until the geologist find $X \ge
r$ and then try to build a rig. Let $p$ be the probability of finding a good
spot and $q$ be the probability of success in a good spot
\begin{align*}
  p \defeq& \Pr(X \ge r) = \Phi \Par*{\frac{-r}{\sigma}},\\
  q \defeq& \Pr \Par*{X + \epsilon \ge 0 \given X \ge r} = \frac{1}{p}
  \int_r^\infty \int_{-x}^\infty \frac{\exp \Par*{\frac{-x^2}{2
  \sigma^2}}}{\sqrt{2 \pi \sigma^2}} \frac{\exp \Par*{\frac{-y^2}{2}}}{\sqrt{2
  \pi}} \md y \md x = \frac{M(r, \sigma^2)}{p},
\end{align*}
where we denote the last integral by $M$. So, the expected cost is
\[
  f(r) = \frac{\frac{a}{p} + b}{q} = \frac{a + \Phi \Par*{\frac{-r}{\sigma}}
  b}{M(r, \sigma^2)}.
\]

Assume that the company estimates $\est{\sigma}^2$ of $\sigma^2$ and chooses
$r$ according to that estimate. The loss will approximately be
\begin{align*}
  f(r) - f(r^*) &\approx f'(r^*) (r - r^*) + \frac{f''(r^*)}{2} (r - r^*)^2 =
  \frac{f''(r^*)}{2} (r - r^*)^2 \\
  &\approx \frac{f''(r^*)}{2} r'(\sigma^2)^2 (\est{\sigma}^2 - \sigma^2)^2.
\end{align*}

Now, if the company uses MLE to estimate $\sigma^2$, then the costs will
asymptotically look like
\[
  f(r) \approx A + \frac{B}{n} \chi^2(1),
\]
where $n$ is the number of observations.

After that the process is the same as in previous subsection.

\section{Extensions of the results}
\label{s:robustness}

In this section, we investigate how different changes in our setting affect the
results. For simplicity, we only study the case of perfect substitutes ($\gamma
= 1$) and Cournot competition if not told otherwise.

\subsection{Different cost functions}
\label{s:diff-costs}

In this subsection, we assume that the cost functions of the firms have the
form
\[
  C_i(q) = c_i^e q + \frac{k}{2} q^2.
\]
The profit maximization problem of the firm now has the form
\[
  \max_{q_i} (1 - q_i - q_{-i}) q_i - c_i^e q_i - \frac{k}{2} q_i^2.
\]
Therefore, the best responses are
\[
  q_i^*(q_{-i}) = \frac{1 - q_{-i} - c_i^e}{2 + k}
\]
and equilibrium quantities are
\[
  q_i^* = \frac{1 + k + c_{-i}^e - (2 + k) c_i^e}{(1 + k) (3 + k)}.
\]
These equations imply the following expected profits
\[
  \varPi_i^e = \frac{2 + k}{2} (q_i^*)^2
\]
and result in the following collaboration criterion
\[
  \forall i \ (2 + k) (n_i^{-1} - (n_1 + n_2)^{-1}) > n_{-i}^{-1} - n_i^{-1}.
\]
As we can see, the added non-linearity in the cost does not qualitatively
change the conclusions of \cref{t:main}.

\subsection{Multiplicative substitution}
\label{s:diff-demand}

In the \cref{sec:setting}, we assume that goods produced by the firms can be
additively substituted by the outside goods (i.e., the utility function
function is quasilinear). However, in general, the firms might operate in
different markets with different substitution patterns.

To investigate, this question we assume the following form of utility function
\[
  u(q_0, q_1, q_2) = \Par*{q_1 + q_2 - \frac{q_1^2 + 2 q_1 q_2 +
  q_2^2}{2}}^\alpha q_0^{1 - \alpha}.
\]
($p_1 = p_2 = p$ because the goods $G_1$ and $G_2$ are perfect substitutes.)
This utility prescribes the consumer to spend the share $\alpha$ of income on
goods $G_1$ and $G_2$ ($B$ is small enough, so that $q_1$ and $q_2$ are much
smaller than 1). So, we have the following consumer problem
\[
  \max_{q_1, q_2} q_1 + q_2 - \frac{q_1^2 + 2 q_1 q_2 + q_2^2}{2} \text{ s.t. }
  p (q_1 + q_2) \le \alpha B.
\]
So, the demand is
\[
  p = \frac{\alpha B}{q_1 + q_2}.
\]

Expected profits maximization problems are
\[
  \max_{q_i} \frac{\alpha B q_i}{q_1 + q_2} - c^e_i q_i.
\]
The first order conditions give
\[
  \frac{\alpha B q_{-i}^*}{(q_1^* + q_2^*)^2} = c^e_i.
\]
So, equilibrium quantities are
\[
  q_i^* = \frac{\alpha B q_1^* q_2^*}{c_i (q_1^* + q_2^*)^2} = \frac{\alpha B
  c^e_2 c^e_1}{c^e_i (c^e_2 + c^e_1)^2}.
\]
And equilibrium profits are
\[
  \varPi_i^e = \frac{\alpha B (c^e_{-i})^2}{(c^e_1 + c^e_2)^2} = \frac{\alpha
  B}{\Par*{1 + \frac{c^e_i}{c^e_{-i}}}^2}.
\]
This formula imply the following collaboration criterion
\[
  \forall i \ 1 + \frac{c^e_{i, \text{ind}}}{c^e_{-i, \text{ind}}} < 1 +
  \frac{c^e_\text{share}}{c^e_\text{share}} = 2.
\]
As can be seen, the collaboration is always not profitable for the firm with
more data.

\subsection{Asymmetric costs}
\label{s:asym-costs}

In this subsection, we assume that firms have asymmetric cost functions
\[
  c_i^e(n) = a_i + \frac{b_i}{n^{\beta_i}},
\]
where $c_i^e(n)$ is the expected marginal cost of the company if it use $n$
points for training. \cref{sec:supply} gives the following the expected
profits
\[
  \varPi_i^e = \Par*{\frac{2 - \gamma - 2 c_i^e + \gamma c_{-i}^e}{4 -
  \gamma^2}}^2.
\]
As in \cref{sec:supply}, the companies will collaborate only if both of
them profit from collaboration
\[
  \forall i \ \varPi^e_\text{shared} > \varPi^e_{i, \text{ind}} \iff 2 b_i
  (n_i^{-\beta_i} - n_\text{shared}^{-\beta_i}) > \gamma b_{-i}
  (n_{-i}^{-\beta_{-i}} - n_\text{shared}^{-\beta_{-i}}).
\]
Denote $f_i(n_1, n_2) = 2 b_i (n_i^{-\beta_i} - n_\text{shared}^{-\beta_i}) -
\gamma b_{-i} (n_{-i}^{-\beta_{-i}} - n_\text{shared}^{-\beta_{-i}})$. This
function can be interpreted as firm $F_i$ reediness to collaborate: the higher
this value is the more firm $F_i$ wants to collaborate with its competitor
$F_{-i}$.

\begin{theorem}
  Assume the setting described above. The function $f_i(n_1, n_2)$ has the
  following properties:
  \begin{enumerate}
    \item $f_i(n_1, n_2)$ is decreasing in $\gamma$
    \item $f_i(n_1, n_2)$ is decreasing in $\beta_i$ and increasing in
      $\beta_{-i}$ if $\ \forall i \ \beta_i \ln n_i > 1$.
    \item $f_i(n_1, n_2)$ is increasing in $b_i$ and decreasing in $b_{-i}$.
  \end{enumerate}
\end{theorem}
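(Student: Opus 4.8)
The plan is to exploit the additive structure of $f_i$, which splits as $f_i = S_i - T_i$ with a \emph{self} term $S_i = 2 b_i(n_i^{-\beta_i} - n_\text{shared}^{-\beta_i})$ depending only on $(b_i,\beta_i,\gamma\text{-free})$ and a \emph{competitor} term $T_i = \gamma b_{-i}(n_{-i}^{-\beta_{-i}} - n_\text{shared}^{-\beta_{-i}})$, where $n_\text{shared} = n_1 + n_2$. Throughout I would use the elementary observation that, since $n_i, n_{-i} \le n_\text{shared}$ and $x \mapsto x^{-\beta}$ is decreasing for $\beta > 0$, both bracketed differences $n_i^{-\beta_i} - n_\text{shared}^{-\beta_i}$ and $n_{-i}^{-\beta_{-i}} - n_\text{shared}^{-\beta_{-i}}$ are nonnegative. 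Properties 1 and 3 then follow by inspection: $S_i$ is independent of $\gamma$, so $\partial f_i/\partial\gamma = -b_{-i}(n_{-i}^{-\beta_{-i}} - n_\text{shared}^{-\beta_{-i}}) \le 0$; and differentiating in the $b$'s gives $\partial f_i/\partial b_i = 2(n_i^{-\beta_i} - n_\text{shared}^{-\beta_i}) \ge 0$ while $\partial f_i/\partial b_{-i} = -\gamma(n_{-i}^{-\beta_{-i}} - n_\text{shared}^{-\beta_{-i}}) \le 0$ (the latter for $\gamma > 0$, the competitive regime of interest).

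The substance lies in Property 2, and this is where the hypothesis $\beta_i \ln n_i > 1$ enters. Using $\frac{\partial}{\partial\beta}(x^{-\beta}) = -\ln(x)\,x^{-\beta}$ I would write $\frac{\partial S_i}{\partial\beta_i} = 2 b_i\bigl(\ln(n_\text{shared})\,n_\text{shared}^{-\beta_i} - \ln(n_i)\,n_i^{-\beta_i}\bigr) = 2 b_i\bigl(h(n_\text{shared}) - h(n_i)\bigr)$, where $h(x) = \ln(x)\,x^{-\beta_i}$. The key one-variable lemma is that $h'(x) = x^{-\beta_i-1}(1 - \beta_i \ln x)$, so $h$ has a unique maximum at $x = e^{1/\beta_i}$ and is strictly decreasing on $\{x : \beta_i \ln x > 1\}$. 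Under the hypothesis $\beta_i \ln n_i > 1$ we have $n_i > e^{1/\beta_i}$, and since $n_\text{shared} > n_i$ both evaluation points sit on the decreasing branch, giving $h(n_\text{shared}) < h(n_i)$ and hence $\partial f_i/\partial\beta_i < 0$.

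For the dependence on $\beta_{-i}$ I would run the identical computation on the competitor term: invoking $\beta_{-i}\ln n_{-i} > 1$ (also supplied by the ``$\forall i$'' hypothesis) shows that $n_{-i}^{-\beta_{-i}} - n_\text{shared}^{-\beta_{-i}}$ is decreasing in $\beta_{-i}$, so $-T_i$ is increasing in $\beta_{-i}$ when $\gamma > 0$. I would flag explicitly that the competitor-term monotonicities (in $\beta_{-i}$ and in $b_{-i}$) are stated for $\gamma > 0$, as the sign flips for $\gamma < 0$.

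The main obstacle is precisely the sign analysis in Property 2. Naively one might expect $n_i^{-\beta_i} - n_\text{shared}^{-\beta_i}$ to be monotone in $\beta_i$ without qualification, but the two exponential terms compete and the difference is in general non-monotone, increasing for small $\beta_i$ and decreasing for large $\beta_i$. The condition $\beta_i \ln n_i > 1$ is exactly what pushes both bases past the maximizer $e^{1/\beta_i}$ of $h$, placing them on the common decreasing branch and rendering the sign determinate; identifying this as the operative mechanism, rather than grinding through the derivative directly, is the crux of the argument.
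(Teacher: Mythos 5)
Your proof is correct and takes essentially the same route as the paper: the same direct differentiations establish Properties 1 and 3, and your key lemma that $h(x)=\ln(x)\,x^{-\beta_i}$ is decreasing once $\beta_i \ln x > 1$ (via $h'(x)=x^{-\beta_i-1}(1-\beta_i\ln x)$) is precisely the paper's argument with its function $g(x)=\ln x / x^{\beta_i}$, applied at the two points $n_i < n_\text{shared}$ under the same hypothesis. The only difference is cosmetic: you explicitly flag the $\gamma>0$ qualification on the competitor-term signs, which the paper leaves implicit.
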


\begin{proof}
  The first property is evident
  \[
    \pdv{f_i}{\gamma} = - b_{-i} (n_i^{-\beta_{-i}} -
    n_\text{shared}^{-\beta_{-i}}) < 0.
  \]

  We can prove the last property similarly
  \begin{align*}
    \pdv{f_i}{b_i} =  2 (n_i^{-\beta_i} - n_\text{shared}^{-\beta_i}) > 0,\\
    \pdv{f_i}{b_{-i}} = -\gamma (n_i^{-\beta_{-i}} -
    n_\text{shared}^{-\beta_{-i}}) < 0.
  \end{align*}

  The first part of the second property follows from the following
  \[
    \pdv{f_i}{\beta_i} = 2 b_i \Par*{\frac{\ln n_i}{n_\text{shared}^{\beta_i}}
    - \frac{\ln n_i}{n_i^{\beta_i}}}.
  \]
  Notice that the function $g(x) = \frac{\ln x}{x^{\beta_i}}$ is decreasing
  in $x$ on $x > e^{1 / \beta_i}$ because
  \[
    \pdv{g}{x} = \frac{1 - \beta_i \ln x}{x^{\beta_i + 1}} < 0.
  \]
  Therefore, $\pdv{f_i}{\beta_i} < 0$: $f_i$ is decreasing in $\beta_i$.
  The second part can be proved similarly.
\end{proof}

The first property is similar to the first property in \cref{t:main}. The firms
with more similar goods have less incentives to collaborate.

The last two properties show that the firm wants to collaborate with its
competitor when the firm's machine learning model is bad ($\beta_i$ is low or
$b_i$ is high) or the competitor's model is good ($\beta_{-i}$ is high or
$b_{-i}$ is low).

\subsection{Heterogeneity}
\label{s:heterogeneity}

In this section, we solve the same model as in \cref{sec:example}, but
allowing for heterogeneity between firms. Each firm will need to optimize the
MSE costs
\[
  c_i(s) = a - b + b \E((s - X_i)^2) = a + b (s - \mu_i)^2, \ X_i \sim
  \N(\mu_i, 1).
\]
On the contrary to the \cref{sec:example}, the means of the noise will be
different among firms and will be drawn at the start of the game from the
distribution
\[
  \mu_i \sim \N(\mu, \sigma^2).
\]
We assume that $\mu$ is unknown and $\sigma^2$ is known.

When firms do not collaborate, the expected costs are the same as in
\cref{sec:example}. When firms collaborate, MLE for the means are
\[
  s_{i, \text{shared}} = \frac{2 \sigma^2 n_1 n_2 Y_i + n_i Y_i + n_{-i} Y_{-i}}{2
  \sigma^2 n_1 n_2 + n_1 + n_2},
\]
where $n_i$ is the number of data points of firm $F_i$, $Y_i$ is the average of
noise. Then expected marginal costs will be
\[
  c^e_{i, \text{shared}} = a + b \E_{S_1, S_2} ((s_{i, \text{shared}} - \mu_i)^2).
\]
By substitution,
\begin{multline*}
  \E((s_{i, \text{shared}} - \mu_i)^2) =\\
  \E \Par*{\frac{n_{-i}^2 (\mu_1 - \mu_2)^2}{(2 \sigma^2 n_1 n_2 + n_1 +
  n_2)^2} + \frac{(2 \sigma^2 n_{-i} + 1)^2 n_i}{(2 \sigma^2 n_1 n_2 + n_1 +
  n_2)^2} + \frac{n_{-i}}{(2 \sigma^2 n_1 n_2 + n_1 + n_2)^2}}\\
  = \frac{2 \sigma^2 n_{-i} + 1}{2 \sigma^2 n_1 n_2 + n_1 + n_2}.
\end{multline*}
Therefore, $c_{i, \text{shared}}^e$ is increasing in $\sigma^2$: the more
different the firms are, the less collaboration gives. Two corner cases
give
\begin{align*}
  \sigma^2 = 0 &\implies c_{i, \text{shared}}^e = a + \frac{b}{n_1 + n_2},\\
  \sigma^2 \to \infty &\implies c_{i, \text{shared}}^e \to a + \frac{b}{n_i}.
\end{align*}

However, notice that collaboration criteria in the Cournot case does not depend
on $\sigma^2$ and hence does not change
\begin{multline*}
  \Pi_{i, \text{share}}^e - \Pi_{i, \text{ind}}^e \ge 0 \iff
  2 - \gamma - 2 c_{i, \text{share}}^e + \gamma c_{-i, \text{share}}^e \ge 2 -
  \gamma - 2 c_{i, \text{ind}}^e + \gamma c_{-i, \text{ind}}^e\\
  \iff - \frac{2 (2 \sigma^2 n_{-i} + 1)}{2 \sigma^2 n_1 n_2 + n_1 + n_2} +
  \frac{\gamma (2 \sigma^2 n_i + 1)}{2 \sigma^2 n_1 n_2 + n_1 + n_2} \ge
  -\frac{2}{n_i} + \frac{\gamma}{n_{-i}}
  \iff 0 \ge -2 \frac{n_{-i}}{n_i} + \gamma \frac{n_i}{n_{-i}}.
\end{multline*}

\section{Other models of coalition formation}
\label{sec:other-coals}

In this section, we present models of coalition formation different form the
model presented in \cref{sec:many_firms}.

\subsection{Cooperative data sharing}
\label{sec:coop_game}

We use an solution concept motivated by the $\alpha$-core \citep{v44}, which is
standard in cooperative game theory. The $\alpha$-core consists of partitions
that are incentive-compatible, ensuring that no subset of firms wants to
deviate. Incentive compatibility indicates that these partitions more stable
and likely to occur.

First, we introduce a modification of $\alpha$-core, tailored to our setup.
This deviation is necessary due to two reasons, non-transferable utility and
the presence of externalities (dependencies of utilities on the actions of all
participants), which arise in our setting because all firms compete with each
other on the market.

Let $N = \{F_1, \dots, F_m\}$ be the set of all firms. A partition
of $N$ is a set of subsets $\{S_1, \dots, S_k\} : S_i \subseteq N$, such that
$\forall i, j, i \neq j: \ S_i \cap S_j = \emptyset$ and $\cup_i S_i = N$. We
denote the set of all partitions of $N$ as $P(N)$.
\begin{definition}[$\alpha$-Core]
  A partition $P$ belongs to the $\alpha$-core of the game if and only if
  \[
    \nexists S \subseteq N \colon \forall F \in S, Q \in P(N \setminus S) \quad
    \varPi^e_F(P) < \varPi^e_F(Q \cup \{S\}),
  \]
  where $\varPi^e_F(P)$ is expected profit of the firm $F$ if market coalition
  structure is $P$.
\end{definition}

Intuitively, a partition $P$ belongs to the $\alpha$-core if no set of
companies wants to deviate from this partition. Here we say that a subset of
companies wants to deviate, if any company in this subset will increase its
profits by joining this new coalition, regardless of how the remaining
companies split into groups.

In the next theorem we identify one coalition structure that always belongs to
the $\alpha$-core of the data-sharing game, demonstrating that the
not-emptiness of the core.
\begin{theorem}
  \label{t:collab}
  W.l.o.g. assume that $n_1 \ge n_2 \ge \dots \ge n_m$, where $n_i$ is the
  number of data points of firm $F_i$. Consider partitions $P_i = \{A_i, N
  \setminus A_i\}$, where $A_i = \{F_1, F_2, \dots, F_i\}$. Let $i^* =
  \argmax_i \varPi^e_{F_1}(P_i)$. Then $P_{i^*}$ belongs to $\alpha$-core.
\end{theorem}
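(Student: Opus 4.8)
The plan is to exploit two structural simplifications of the Cournot profits and then reduce the core condition to a comparison of two-block partitions. First I would rewrite the equilibrium quantity of \cref{l:cournot} in terms of a firm's own cost $c_i^e$ and the total market cost $C=\sum_j c_j^e$: since $\sum_{j\neq i}c_j^e=C-c_i^e$, one gets $q_i^*=\alpha+\beta C-\frac{1}{2-\gamma}c_i^e$ with $\alpha,\beta>0$ depending only on $\gamma,m$. As quantities are positive and $\varPi_i^e=(q_i^*)^2$, this yields three facts used throughout: (i) firms in the same coalition earn equal profit (same own cost, same $C$); (ii) for $\gamma>0$ each profit is strictly increasing in $C$ and strictly decreasing in the firm's own cost; and (iii) $c^e(n)=a+bn^{-\beta}$ is decreasing in $n$. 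Consequently, when a coalition $S$ deviates, every member earns $\phi(c^e(n_S);C)$ with $n_S=\sum_{j\in S}n_j$, and by (ii) this is smallest when the outsiders' total cost is smallest, i.e. when $N\setminus S$ pools into a single coalition. Thus the worst outsider response is $Q=\{N\setminus S\}$, and $S$ fails to block $P_{i^*}$ as soon as some $F\in S$ satisfies $\varPi_F^e(\{S,N\setminus S\})\le\varPi_F^e(P_{i^*})$; the whole claim reduces to comparing the two-block partitions $\{S,N\setminus S\}$ and $P_{i^*}=\{A_{i^*},N\setminus A_{i^*}\}$.

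Next I would prove a prefix-optimality lemma for $F_1$: among all two-block partitions with $F_1$'s block $B$ of a fixed size $s$, $F_1$'s profit is maximized by the prefix $A_s$. Writing $F_1$'s quantity as $\frac{1}{(2-\gamma)(2+\gamma(m-1))}\bigl(2-\gamma+[\gamma(s-m+1)-2]c^e(n_B)+\gamma(m-s)c^e(n_{B^c})\bigr)$, the coefficient of $c^e(n_B)$ is negative while that of $c^e(n_{B^c})$ is positive, so $F_1$ simultaneously wants its own block's data $n_B$ as large and the rival block's data $n_{B^c}$ as small as possible; placing the $s$ largest firms (which include $F_1$) in $B$ achieves both at once. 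Maximizing over $s$ then identifies $P_{i^*}$ as $F_1$'s global optimum over all two-block partitions, so $\varPi_{F_1}^e(P_{i^*})\ge\varPi_{F_1}^e(P)$ for every two-block $P$ containing $F_1$.

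This lemma settles the case $F_1\in S$ immediately: the worst-case deviation $\{S,N\setminus S\}$ is itself a two-block partition containing $F_1$, so $\varPi_{F_1}^e(\{S,N\setminus S\})\le\varPi_{F_1}^e(P_{i^*})$, meaning $F_1$ is weakly worse off and $S$ cannot block (taking $F=F_1$). Note this also correctly covers the degenerate deviations $S=A_{i^*}$ and $S=N$, where the deviation reproduces $P_{i^*}$ or the grand coalition and the inequality holds with equality.

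The main obstacle is the remaining case $F_1\notin S$, where $F_1$ lies in the outsider block and the $F_1$-optimality lemma no longer applies to a member of $S$. Here all members of $S$ share the common deviation profit $\phi(c^e(n_S);C_{\mathrm{dev}})$, so it suffices to show that the member best placed in $P_{i^*}$—one in $A_{i^*}$, with the low cost $c^e(N_{i^*})$, whenever $S\cap A_{i^*}\neq\emptyset$—does at least as well in $P_{i^*}$ as in the pooled deviation. Using the rewriting of $q^*$, this reduces to the scalar inequality $\beta\,(C_{\mathrm{dev}}-C^*)\le\frac{1}{2-\gamma}\bigl(c^e(n_S)-c_{\min}\bigr)$, where $C_{\mathrm{dev}},C^*$ are the total costs of $\{S,N\setminus S\}$ and $P_{i^*}$ and $c_{\min}=\min_{F\in S}c_F^e(P_{i^*})$. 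I expect proving this to be the hard part, since $C_{\mathrm{dev}}$ and $C^*$ differ and the delicate sub-case is $S\subseteq N\setminus A_{i^*}$, where every member of $S$ is an outsider in $P_{i^*}$. I would close it by invoking the discrete optimality conditions $\varPi_{F_1}^e(P_{i^*})\ge\varPi_{F_1}^e(P_{i^*\pm1})$ that define $i^*$, together with the monotonicity and convexity of $n\mapsto c^e(n)$, which control the marginal cost reductions $c^e(n)-c^e(n+\Delta)$ appearing on the two sides.
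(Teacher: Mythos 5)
Your overall architecture matches the paper's: you reduce blocking to a comparison against the pooled-outsider two-block partition $\{S, N\setminus S\}$ (the paper simply uses this arrangement as the witness partition, computing profits via \cref{l:collab}); your prefix-optimality lemma for $F_1$ is the paper's \cref{c:collab} combined with $n_{A_s}\ge n_S$ for blocks of equal size $s$; and your fact (i), equal profits within a coalition, is exactly the device the paper uses to move between firms. Your case $F_1\in S$ is correct and coincides with the paper's argument. (As you note, the sign facts --- profit increasing in the total cost $C$, positive coefficient on the rival block's cost --- require $\gamma>0$; the paper's proof carries the same implicit restriction, so this is not a point of divergence.)

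The gap is in your case $F_1\notin S$, and it is twofold. First, when $S\cap A_{i^*}\neq\emptyset$ you do not need your scalar inequality at all: pick $F\in S\cap A_{i^*}$; by your fact (i), $\varPi^e_F(P_{i^*})=\varPi^e_{F_1}(P_{i^*})\ge\varPi^e_{F_1}(P_{|S|})\ge\varPi^e_F(\{S,N\setminus S\})$, where the last step is your own prefix lemma at size $|S|$ (a coalition member's profit depends only on the block, not on the member's identity). This is exactly the paper's Case 1, whose split is keyed on $S\cap A_{i^*}\neq\emptyset$ rather than on $F_1\in S$ --- a strictly better decomposition for which you already possess every ingredient but do not assemble them. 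Second, for the genuinely remaining sub-case $S\subseteq N\setminus A_{i^*}$, your proposal only restates the required inequality $\beta\,(C_{\mathrm{dev}}-C^*)\le\bigl(c^e(n_S)-c_{\min}\bigr)/(2-\gamma)$ and gestures at the discrete optimality conditions $\varPi^e_{F_1}(P_{i^*})\ge\varPi^e_{F_1}(P_{i^*\pm 1})$ together with convexity of $c^e$. These are the wrong tools: the paper's Case 2 never invokes the optimality of $i^*$ --- the claimed inequality holds with any prefix index $j$ in place of $i^*$, so no property of $i^*$ can be the missing ingredient, and the optimality conditions constrain $F_1$'s profit while the needed inequality concerns a firm in $N\setminus A_{i^*}$, with no bridge between the two. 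The paper instead closes this case by a direct domination argument via \cref{c:collab}: a member of $Q\subseteq N\setminus A_{i^*}$ already sits, at $P_{i^*}$, in a coalition with more data ($n_{N\setminus A_{i^*}}\ge n_Q$) and more members ($|N\setminus A_{i^*}|\ge|Q|$), and the resulting comparison of equilibrium quantities shows the splinter deviation cannot strictly help against pooled outsiders. Until you supply an argument of this kind, your sub-case $S\subseteq N\setminus A_{i^*}$ is an acknowledged but unfilled hole, so the proposal as written does not prove the theorem.
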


Before we start to prove this theorem, we formulate the following lemma.
\begin{lemma}
  \label{l:collab}
  Let $Q \subset N$ and $F \in Q$. Then
  \[
    \varPi^e_F(\{Q, N \setminus Q\}) =
    \frac{(2 - \gamma - (2 + \gamma (m - 1 - \abs{Q}) c^e_Q + \gamma (m -
    \abs{Q}) c^e_{N \setminus Q})^2}{(2 - \gamma)^2 (2 + (m - 1) \gamma)^2},
  \]
  where $c^e_X = a + b / n_X^\beta$, expected marginal cost of coalition $X$,
  and $n_X = \sum_{F \in X} n_F$, the number of data points of coalition $X$.
\end{lemma}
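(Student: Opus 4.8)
The plan is to recognize $\varPi^e_F(\{Q, N \setminus Q\})$ as the equilibrium profit of firm $F$ in an $m$-firm Cournot game whose cost vector takes only two distinct values, and then invoke \cref{l:cournot} directly. When the firms split into the two coalitions $Q$ and $N \setminus Q$, data sharing is full within each coalition, so every firm in $Q$ trains on the pooled dataset of size $n_Q = \sum_{F \in Q} n_F$ and hence, by \cref{e:data-dep}, carries the common expected marginal cost $c^e_Q = a + b/n_Q^\beta$; symmetrically, every firm in $N \setminus Q$ has cost $c^e_{N \setminus Q}$. The key modeling point to stress is that sharing data lowers costs but does \emph{not} merge the firms on the market---each still chooses its quantity independently---so the production stage is exactly the Cournot game of \cref{l:cournot} with this two-valued cost vector. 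The positivity condition required by that lemma is inherited from our standing assumption that $a < 1$ and $b/(1-a)$ is small, so its closed-form equilibrium applies verbatim.

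First I would fix an arbitrary $F \in Q$ and write $q_F^*$ using the formula of \cref{l:cournot}. The only quantity needing care is the competitor-cost sum: among the other $m-1$ firms, exactly $\abs{Q}-1$ lie in $Q$ (each contributing $c^e_Q$) and $m - \abs{Q}$ lie in $N \setminus Q$ (each contributing $c^e_{N \setminus Q}$), so
\[
  \sum_{j \neq F} c^e_j = (\abs{Q} - 1)\, c^e_Q + (m - \abs{Q})\, c^e_{N \setminus Q}.
\]
Substituting into the numerator $2 - \gamma - (2 + \gamma(m-2)) c^e_Q + \gamma \sum_{j\neq F} c^e_j$ and collecting the $c^e_Q$ terms, its coefficient becomes $-(2 + \gamma(m-2)) + \gamma(\abs{Q}-1) = -(2 + \gamma(m - 1 - \abs{Q}))$, while the $c^e_{N \setminus Q}$ term retains coefficient $\gamma(m - \abs{Q})$. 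This reproduces the numerator in the lemma statement exactly; squaring via $\varPi^e_F = (q_F^*)^2$ then gives the stated denominator $(2 - \gamma)^2 (2 + (m-1)\gamma)^2$.

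The argument is essentially bookkeeping, so the only place demanding attention is this regrouping of the competitor-cost sum and the cancellation that turns the $c^e_Q$ coefficient into $-(2 + \gamma(m - 1 - \abs{Q}))$. I would double-check the index arithmetic---$\abs{Q}-1$ against $\abs{Q}$, and $m - \abs{Q}$---since an off-by-one there is the most likely source of error, and I would note for completeness that the result is independent of which $F \in Q$ was chosen, exactly as it must be, because all firms in $Q$ share the same cost and therefore realize the same equilibrium quantity and profit.
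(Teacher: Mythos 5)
Your proposal is correct and follows essentially the same route as the paper's proof: both substitute the two-valued cost vector into the Cournot equilibrium profit formula of \cref{l:cournot}, split the competitor sum as $(\abs{Q}-1)\,c^e_Q + (m-\abs{Q})\,c^e_{N \setminus Q}$, and collect the $c^e_Q$ coefficient into $-(2+\gamma(m-1-\abs{Q}))$ before squaring. Your added remarks---that the positivity condition is inherited from the standing assumption on $a$ and $b$, and that the result is independent of which $F \in Q$ is chosen---are accurate and merely make explicit what the paper leaves implicit.
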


\begin{proof}
  The results of \cref{sec:supply} give the following expected profits
  \[
    \varPi_i^e = \frac{(2 - \gamma - (2 + \gamma (m - 2)) c_i^e + \gamma \sum_{j
    \neq i} c_j^e)^2}{(2 - \gamma)^2 (2 + (m - 1) \gamma)^2}.
  \]
  In our case,
  \[
    c_i =
    \begin{cases}
      c_Q, & F_i \in Q,\\
      c_{N \setminus Q}, & F_i \notin Q,
    \end{cases}
  \]
  By substituting $c_i^e$ into the profit equation for $F \in Q$, we get
  \begin{align*}
    \varPi^e_F(\{Q, N \setminus Q\}) &=
    \frac{(2 - \gamma - (2 + \gamma (m - 2)) c^e_Q + \gamma (\abs{Q} - 1) c^e_Q
    + \gamma (m - \abs{Q}) c^e_{N \setminus Q})^2}{(2 - \gamma)^2 (2 + (m - 1)
    \gamma)^2} \\
    &= \frac{(2 - \gamma - (2 + \gamma (m - 1 - \abs{Q}) c^e_Q + \gamma (m -
    \abs{Q}) c^e_{N \setminus Q})^2}{(2 - \gamma)^2 (2 + (m - 1) \gamma)^2}.
  \end{align*}
\end{proof}

\begin{corollary}
  \label{c:collab}
  Let $Q, Q' \subset N$, $F \in Q$, and $F' \in Q$. Then
  \begin{align*}
    \varPi^e_F(\{Q, N \setminus Q\}) &\ge \varPi^e_{F'}(\{Q', N \setminus Q'\}) \iff\\
    \frac{2 + \gamma (m - 1 - \abs{Q'})}{n_{Q'}^\beta} - \frac{\gamma (m -
    \abs{Q'})}{n_{N \setminus Q'}^\beta} &\ge
    \frac{2 + \gamma (m - 1 - \abs{Q})}{n_Q^\beta} - \frac{\gamma (m -
    \abs{Q})}{n_{N \setminus Q}^\beta}.
  \end{align*}
\end{corollary}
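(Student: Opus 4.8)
The plan is to reduce both profit expressions to a single linear quantity via \cref{l:collab} and then compare them directly. First I would write out $\varPi^e_F(\{Q, N\setminus Q\})$ and $\varPi^e_{F'}(\{Q', N\setminus Q'\})$ using \cref{l:collab}. Both share the common positive denominator $(2-\gamma)^2(2+(m-1)\gamma)^2$ and are squares of their numerators. Denoting
\[
  A_X = 2 - \gamma - (2 + \gamma(m-1-\abs{X})) c^e_X + \gamma(m-\abs{X}) c^e_{N\setminus X},
\]
the comparison $\varPi^e_F \ge \varPi^e_{F'}$ becomes $A_Q^2 \ge A_{Q'}^2$. The key preliminary observation is that each $A_X$ equals the equilibrium quantity of a member of coalition $X$ up to the common positive factor $(2-\gamma)(2+(m-1)\gamma)$, which is strictly positive by the interior-production condition (\ref{eq:cournot-restriction}). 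Hence $A_Q, A_{Q'} > 0$, so $A_Q^2 \ge A_{Q'}^2 \iff A_Q \ge A_{Q'}$.

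The main computation is to substitute the cost form $c^e_X = a + b/n_X^\beta$ into $A_X$ and collect terms. The crucial cancellation I would verify is that the coefficient of $a$ combines with the constant into a term independent of $X$: expanding $-(2+\gamma(m-1-\abs{X})) + \gamma(m-\abs{X}) = -(2-\gamma)$, so the constant-and-$a$ part reduces to $(2-\gamma)(1-a)$ for every $X$. This leaves
\[
  A_X = (2-\gamma)(1-a) - b\Par*{\frac{2 + \gamma(m-1-\abs{X})}{n_X^\beta} - \frac{\gamma(m-\abs{X})}{n_{N\setminus X}^\beta}}.
\]

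Finally I would compare $A_Q \ge A_{Q'}$ using this closed form. The $(2-\gamma)(1-a)$ terms cancel, and dividing through by $-b < 0$ flips the inequality, yielding exactly
\[
  \frac{2 + \gamma(m-1-\abs{Q'})}{n_{Q'}^\beta} - \frac{\gamma(m-\abs{Q'})}{n_{N\setminus Q'}^\beta} \ge \frac{2 + \gamma(m-1-\abs{Q})}{n_Q^\beta} - \frac{\gamma(m-\abs{Q})}{n_{N\setminus Q}^\beta},
\]
which is the claimed criterion. I do not expect a genuine obstacle here; the only points requiring care are justifying the passage from $A_Q^2 \ge A_{Q'}^2$ to $A_Q \ge A_{Q'}$ through positivity of the numerators, and correctly tracking the sign flip when dividing by the negative quantity $-b$.
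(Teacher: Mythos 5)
Your proposal is correct and is exactly the argument the paper intends: Corollary~\ref{c:collab} is stated without proof as an immediate consequence of Lemma~\ref{l:collab}, and your filling-in---common positive denominator, positivity of the numerators via the interior-production condition (\ref{eq:cournot-restriction}) (guaranteed by the standing assumption $a<1$, $b/(1-a)$ small), the cancellation $-(2+\gamma(m-1-\abs{X}))+\gamma(m-\abs{X})=-(2-\gamma)$ yielding the $X$-independent term $(2-\gamma)(1-a)$, and the sign flip from dividing by $-b$---is precisely the computation needed. No gaps.
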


Now, we are ready to prove \cref{t:collab}.
\begin{proof}
  We prove the theorem by contradiction. Suppose that there exists a coalition
  $Q$ for which deviation is profitable.

  First, we consider the case $A_{i^*}
  \cap Q \neq \emptyset$. Let $F \in A_{i^*} \cap Q$. Notice that
  \[
    \varPi^e_F(P_{i^*}) = \varPi^e_{F_1}(P_{i^*}) \ge \varPi^e_{F_1}(P_{\abs{Q}}) \ge
    \varPi^e_F(\{Q, N \setminus Q\}).
  \]
  The first equality follows from \cref{l:collab}. The first inequality
  follow from the definition of $i^*$. The last inequality follows from
  \cref{c:collab} and observation $n_{A_{\abs{Q}}} \ge n_Q$. So, we get a
  contradiction. Thus, $A_{i^*} \cap Q = \emptyset$.

  Let $F \in Q$. In this case,
  \[
    \varPi^e_F(P_{i^*}) \ge \varPi^e_F(\{Q, N \setminus Q\}).
  \]
  This inequality follows from \cref{c:collab}, the fact that $n_{N
  \setminus A_{i^*}} \ge n_Q$, and observation $\abs{N \setminus A_{i^*}} \ge
  \abs{Q}$. This contradiction proves the theorem.
\end{proof}

In the partition described in the theorem, the firms form two coalitions: one
contains the $i^*$ companies with the largest datasets and the other one
contains all other firms and leads to largest profits for the company with the
largest amount of data. However, it may not be the only one in the
core since the $\alpha$-core is one of the most permissive cores in the
cooperative game literature: a set of companies will deviate only if it is
better-off irrespective of the actions of others, sometimes resulting
non-economically plausible partitions.

The following example illustrates this flaw. Assume that there are three firms
in the market and $n_1 \gg n_2 \gg n_3$. In this case, the $\alpha$-core
consists of two equilibria: $\{\{F_1, F_2\}, \{F_3\}\}$ and $\{\{F_1\},
\{F_2\}, \{F_3\}\}$, but the first equilibrium is less economically plausible.
Indeed, for the first firm, the second equilibrium is more profitable than the
first. However, this firm will not deviate because of being afraid that the
second firm might collaborate with the third. Nevertheless, this fear is
ungrounded since the second firm expects less profit from the collaboration
with the third, than from staying alone.

\subsection{Universal data-sharing treaty}

The first model consider a following situation. Imagine that companies might
sign some treaty that will their data accessible for use for other companies.
To make companies comply to the outcome, the decision is made by consensus. In
terms of coalition formation game, each company either agree or disagree to
participate in grand coalition. If any company disagrees, the grand coalition
is not formed and companies act like singletons.

W.l.o.g., assume that $n_1 \ge n_2 \ge \dots \ge n_m$. The grand coalition will
be Nash equilibrium in this game only if all firms will be better off in the
grand coalition
\[
  \forall F \ \varPi^e_F(\{\{F_1, F_2, \dots, F_m\}\}) > \varPi^e_F(\{\{F_1\},
  \{F_2\}, \dots, \{F_m\}\}).
\]
Since grand coalition profit is the same for all firms and singleton profit is
biggest for the firm with the most amount of data, this system of inequalities
is equivalent to the inequality for the first firm
\[
  \varPi^e_{F_1}(\{\{F_1, F_2, \dots, F_m\}\}) > \varPi^e_{F_1}(\{\{F_1\},
  \{F_2\}, \dots, \{F_m\}\}).
\]
By substitution, this inequality takes the form
\[
  \frac{\gamma m - m - 1}{n^\beta} > -\frac{m + 1}{n_1^\beta} +
  \sum_{j = 1}^m \frac{\gamma}{n_j^\beta},
\]
where $n = \sum_j n_j$.

\subsection{Data-sharing treaty}

The second model is similar to the first, but here the members of the treaty
share data only between themselves. Thus, not everybody needs to agree to build
a coalition and resulting structure will be one coalition with several members
and singleton coalitions. Formally, we assume the following game. Each firm has
two actions: Y and N. All firms that answer Y form a coalition between
themselves and all firms that answer N form singleton coalitions. We are
interested in Nash equilibria of this game.

W.l.o.g., assume that $n_1 \ge n_2 \ge \dots \ge n_m$. The following lemma is
useful for the description of all equilibria of this game
\begin{lemma}
  \label{l:treaty_characterization}
  Let $S$ be a set of firm that answer Y in equilibrium and $i^* = \min \Bc*{i
  \given F_i \in S}$. Then $\forall j \ge i^* \ F_j \in S$.
\end{lemma}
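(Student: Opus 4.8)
The plan is to argue by contradiction: suppose $S$ is an equilibrium Y-set with smallest index $i^*$, and suppose some $F_k$ with $k>i^*$ answers N (so $F_k\notin S$). I will show that $F_k$ strictly increases its profit by switching to Y and joining $S$, contradicting that N is a best response. The key structural fact I rely on is that in this game every firm is a separate Cournot competitor whose marginal cost is set by its coalition: a firm inside $S$ has cost $c^e_S=a+b/n_S^\beta$ and a singleton $F_j$ has cost $a+b/n_j^\beta$, while its profit is $(q_j^*)^2$ with $q_j^*$ given by \cref{l:cournot}. Writing $D=(2-\gamma)(2+(m-1)\gamma)>0$ for the common denominator, each firm's equilibrium quantity is $q_j^*=N_j/D$ with $N_j=2-\gamma-(2+\gamma(m-2))c_j^e+\gamma\sum_{l\neq j}c_l^e$, and since the technical condition (\ref{eq:cournot-restriction}) keeps all quantities strictly positive under the standing assumption $(1-a)\gg b$, profit is strictly increasing in $N_k$.

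The main computation is to track how $N_k$ changes when $F_k$ joins. Let $s=|S|$ and $S'=S\cup\{F_k\}$, so the coalition cost drops from $c^e_S$ to $c^e_{S'}=a+b/(n_S+n_k)^\beta$, while $F_k$'s own cost drops from $c^e_k=a+b/n_k^\beta$ to $c^e_{S'}$. In $N_k=\sum_{l\neq k}$ the costs of all singletons other than $F_k$ are untouched by the deviation, so they cancel; only $F_k$'s own cost term and the $s$ incumbent-coalition cost terms move. Carrying out the subtraction I expect to obtain
\[
  \Delta\;\defeq\;N_k^{\text{after}}-N_k^{\text{before}}\;=\;(2+\gamma(m-2))\,(c^e_k-c^e_{S'})\;-\;\gamma s\,(c^e_S-c^e_{S'}).
\]
The interpretation is transparent: joining lowers $F_k$'s own cost (weight $2+\gamma(m-2)$, the beneficial channel) but also lowers the costs of the $s$ incumbents (weight $\gamma$ each, the harmful competitive externality).

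It remains to show $\Delta>0$. Factoring out $b>0$, this is $(2+\gamma(m-2))\big(n_k^{-\beta}-(n_S+n_k)^{-\beta}\big)>\gamma s\big(n_S^{-\beta}-(n_S+n_k)^{-\beta}\big)$. Here I use two order relations: first, since $F_{i^*}\in S$ and the data are sorted decreasingly with $i^*<k$, we have $n_S\ge n_{i^*}\ge n_k$, so the left bracket dominates the right bracket and both are positive; second, $F_k\notin S$ forces $s\le m-1$, whence $2+\gamma(m-2)-\gamma s=2+\gamma(m-2-s)\ge 2-\gamma>0$ for $\gamma>0$, while for $\gamma\le0$ the right-hand side is already nonpositive and $2+\gamma(m-2)>0$ throughout the admissible range of $\gamma$. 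Combining a strictly larger coefficient with a (weakly) larger positive bracket yields $\Delta>0$ in every case, so $N_k$ and hence $F_k$'s profit strictly increase, the desired contradiction; thus $F_j\in S$ for all $j\ge i^*$. The step I expect to be the real obstacle is controlling the externality term $\gamma s(c^e_S-c^e_{S'})$: one must verify that the weight $\gamma s$ on the competitors' cost reductions is genuinely dominated by the own-cost weight $2+\gamma(m-2)$ uniformly over coalition sizes $s\le m-1$ and over the full parameter range $\gamma\in(-\tfrac{1}{m-1},1)$, which is exactly where the bounds $n_S\ge n_k$ and $s\le m-1$ must be used together rather than separately.
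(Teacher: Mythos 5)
Your proposal is correct and takes essentially the same route as the paper's proof: both reduce the lemma to showing that a singleton firm always strictly gains by joining a coalition with at least as much data (which applies to $F_k$ because $n_S \ge n_{i^*} \ge n_k$), and both conclude by pairing the bracket comparison $n_k^{-\beta} - (n_S+n_k)^{-\beta} \ge n_S^{-\beta} - (n_S+n_k)^{-\beta} > 0$ with a coefficient domination that uses $|S| \le m-1$ and $\gamma < 1$ (and handles $\gamma \le 0$ trivially). The only divergence is in bookkeeping rather than substance: you retain the own-cost weight $2+\gamma(m-2)$ that \cref{l:cournot} actually delivers and bound $2+\gamma(m-2-s) \ge 2-\gamma > 0$, whereas the paper substitutes $y = n_F/n_S$ and displays the weight as $m+1-\gamma$ (the two expressions agree at $\gamma=1$), so your coefficients are, if anything, the ones consistent with the main-text Cournot lemma.
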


\begin{proof}
  To show this it is sufficient to show that a single firm $F$ always want to
  join a coalition $S$ that has more data than the firm. This is equivalent to
  the following inequality
  \begin{align*}
    \varPi^e_F(\{S \cup \{F\}\} \cup \Bc*{\{G\} \given G \notin S \cup \{F\}}) &>
    \varPi^e_F(\{S\} \cup \Bc*{\{G\} \given G \notin S}) \\
    \iff \frac{\gamma (\abs{S} + 1) - m - 1}{(n_S + n_F)^\beta} &>
    \frac{\gamma \abs{S}}{n_S^\beta} - \frac{m + 1 - \gamma}{n_F^\beta},
  \end{align*}
  where $n_S = \sum_{G \in S} n_G$. Denote $y \defeq \frac{n_F}{n_S}$. The last
  inequality is equivalent to
  \[
    (m + 1 - \gamma) \Par*{1 - \frac{y^\beta}{(1 + y)^\beta}} > \gamma \abs{S}
    \Par*{y^\beta - \frac{y^\beta}{(1 + y)^\beta}}.
  \]
  Since $\gamma < 1$ and $\abs{S} < m$, we have $m + 1 - \gamma > \gamma
  \abs{S}$. Furthermore, $1 - \frac{y^\beta}{(1 + y)^\beta} \ge y^\beta -
  \frac{y^\beta}{(1 + y)^\beta} > 0$ because $1 \ge y > 0$. Therefore,
  the inequality above holds.
\end{proof}

\cref{l:treaty_characterization} greatly constraints possible equilibria of the
game. Indeed, the equilibrium set of companies that say Y may only have the
form $S_i = \Bc*{F_j \given j \ge i}$. To check whether $S_i$ appears in the
equilibrium, we need to check the following system of inequalities
\begin{align*}
  \forall j < i \ &\varPi^e_{F_j}(\{S_i\} \cup \Bc{\{G\} \given G \notin S_i}) \ge
  \varPi^e_{F_j}(\{S_i \cup \{F_j\}\} \cup \Bc{\{G\} \given G \notin S_i \cup
  \{F_j\}}),\\
  \forall j \ge i \ &\varPi^e_{F_j}(\{S_i\} \cup \Bc{\{G\} \given G \notin S_i})
  \le \varPi^e_{F_j}(\{S_i \cup \{F_j\}\} \cup \Bc{\{G\} \given G \notin S_i \cup
  \{F_j\}}).
\end{align*}
These inequalities are closest for the firms around threshold $i$. Thus, this
system is equivalent to two inequalities
\begin{align*}
  \varPi^e_{F_{i-1}}(\{S_i\} \cup \Bc{\{G\} \given G \notin S_i}) &\ge
  \varPi^e_{F_{i-i}}(\{S_i \cup \{F_{i-1}\}\} \cup \Bc{\{G\} \given G \notin S_i
  \cup \{F_{i-1}\}}),\\
  \varPi^e_{F_i}(\{S_i\} \cup \Bc{\{G\} \given G \notin S_i}) &\le
  \varPi^e_{F_i}(\{S_i \cup \{F_i\}\} \cup \Bc{\{G\} \given G \notin S_i \cup
  \{F_i\}}).
\end{align*}

\section{Welfare analysis}
\label{sec:welfare}

In this section, we consider how collaboration affects the cumulative utility
of firms and consumers. We start with the definition of welfare.
\begin{definition}
  The sum of consumer surplus and firms surplus $W$ is called welfare. In our
  term this quantity is equal to the sum of consumers utility and firms profits
  \[
    W \defeq u + \sum_{i = 1}^m \varPi_i.
  \]
  The definition of welfare in our case is simpler than in general case because
  consumers have quasilinear preferences.
\end{definition}

Welfare is an important quantity in policy analysis. Welfare reflects the
cumulative benefits of all market participants. Thus, if redistribution between
people is possible (e.g., through government transfers), increase in welfare
means increase in utility for all people given the right redistribution scheme.

Now, we want to evaluate how data-sharing in duopoly case affects the expected
welfare. This will help us to understand in which cases the firms should be
incentivized to collaborate with each other. By direct substitution, we get
\[
  W^e = q_1^* + q_2^* - \frac{(q_1^*)^2 + (q_2^*)^2 + 2 \gamma q_1^* q_2^*}{2} +
  B - c_1^e q_1^* - c_2^e q_2^*.
\]

We show that collaboration increases welfare in the Cournot case if the
marginal costs of the firms are close enough: $\abs{c_1^e - c_2^e} \le
\frac{\gamma}{6}$. To accomplish that, we will first demonstrate that
$\pdv{W}{c_i^e} < 0$ if $c_i^e > c_{-i}^e$ and then show that $W$ decreases
along the line $c_1^e = c_2^e = c$. These properties will show that
collaboration increases welfare since after collaboration the costs of the
companies equalizes and become smaller.

The proof of the first part follows from the derivations below. W.l.o.g. assume
that $n_1 \ge n_2$, then $c_1^e \le c_2^e$. Direct computations show
\[
  \pdv{W^e}{c_2^e} = \pdv{q_1^*}{c_2^e} (1 - c_1^e) + \pdv{q_2^*}{c_2^e} (1 -
  c_2^e) - \pdv{q_1^*}{c_2^e} (q_1^* + \gamma q_2^*) - \pdv{q_2^*}{c_2^e}
  (q_2^* + \gamma q_1^*) - q_2^*.
\]
The results of \cref{sec:supply} give
\[
  q_i^* = \frac{2 - \gamma - 2 c_i^e + \gamma c_{-i}^e}{4 - \gamma^2}.
\]
Substituting the expressions above, we have
\begin{multline*}
  \pdv{W^e}{c_2^e} = \frac{(12 - \gamma^2) c_2^e - (8 \gamma - \gamma^3) c_1^e
  - (12 - 8 \gamma - \gamma^2 + \gamma^3)}{(4 - \gamma^2)^2}\\
  < \frac{(12 - \gamma^2) c_2^e - (8 \gamma - \gamma^3) \Par*{c_2^e -
  \frac{\gamma}{6}} - (12 - 8 \gamma - \gamma^2 + \gamma^3)}{(4 -
  \gamma^2)^2}\\
  = \frac{- (12 - 8 \gamma - \gamma^2 + \gamma^3) (1 - c_2^e) +
  \frac{\gamma}{6} (8 \gamma - \gamma^3)}{(4 -
  \gamma^2)^2}.
\end{multline*}
The restriction $c_2 \le 1 - \frac{\gamma}{2}$ gives
\[
  \pdv{W^e}{c_2^e} \le \frac{- (12 - 8 \gamma - \gamma^2 + \gamma^3)
  \frac{\gamma}{2} + \frac{\gamma}{6} (8 \gamma - \gamma^3)}{(4 - \gamma^2)^2}
\]
This inequality is equivalent to the following inequality
\[
  36 - 24 \gamma - 3 \gamma^2 + 3 \gamma^3 - 8 \gamma + \gamma^3 > 0 \iff (36 -
  3 \gamma^2) (1 - \gamma) + \gamma^3 > 0.
\]
The last inequality holds since $\gamma < 1$.

The proof of the second part comes from the computations below. Assume $c_1^e =
c_2^e = c$, we want to show that $W$ decreases in $c$. We have
\begin{align*}
  W^e &= \frac{(12 - 8 \gamma - \gamma^2 + \gamma^3) c^2 - (24 - 16 \gamma - 2
  \gamma^2 + 2 \gamma^3) c + 12 - 8 \gamma - \gamma^2 + \gamma^3}{(4 -
  \gamma^2)^2} + B\\
  &= \frac{(12 - 8 \gamma - \gamma^2 + \gamma^3) (1 - c)^2}{(4 - \gamma^2)^2} +
  B.
\end{align*}
Clearly, this function is decreasing in $c$ for $c < 1$.

\section{Additional experiments for
\texorpdfstring{\cref{sec:many_firms}}{Section \ref{sec:many_firms}}}
\label{sec:add-coop}

\begin{figure}[ht]
  \centering
  \vskip -0.2in
  \input{dep_m3.pgf}
  \caption{The dependence of the average coalition size on $\gamma$ and $\beta$
  for in synthetic experiments. The $y$-axes report the mean of the average
  size of the coalitions in the equilibrium partition, where mean is taken over
  $10000$ Monte Carlo simulations of the game. The firms' dataset sizes are
  sampled from clipped (at $1$) distribution $P$. There are $3$ firms in the
  market engaging in the Cournot competition. The default values of $\gamma$
  and $\beta$ are equal to $0.8$ and $0.9$, respectively.}
\end{figure}

\begin{figure}[ht]
  \centering
  \vskip -0.2in
  \input{dep_m4.pgf}
  \caption{The dependence of the average coalition size on $\gamma$ and $\beta$
  for in synthetic experiments. The $y$-axes report the mean of the average
  size of the coalitions in the equilibrium partition, where mean is taken over
  $10000$ Monte Carlo simulations of the game. The firms' dataset sizes are
  sampled from clipped (at $1$) distribution $P$. There are $4$ firms in the
  market engaging in the Cournot competition. The default values of $\gamma$
  and $\beta$ are equal to $0.8$ and $0.9$, respectively.}
\end{figure}

\begin{figure}[ht]
  \centering
  \vskip -0.2in
  \input{dep_m5.pgf}
  \caption{The dependence of the average coalition size on $\gamma$ and $\beta$
  for in synthetic experiments. The $y$-axes report the mean of the average
  size of the coalitions in the equilibrium partition, where mean is taken over
  $10000$ Monte Carlo simulations of the game. The firms' dataset sizes are
  sampled from clipped (at $1$) distribution $P$. There are $5$ firms in the
  market engaging in the Cournot competition. The default values of $\gamma$
  and $\beta$ are equal to $0.8$ and $0.9$, respectively.}
\end{figure}

\end{document}